\pdfoutput=1
\documentclass[11pt,letterpaper]{article}

\usepackage[left=1in,right=1in,top=1in,bottom=0.9in]{geometry}

\usepackage{graphicx}
\usepackage{float}
\usepackage[table,xcdraw]{xcolor}
\usepackage{wrapfig,lipsum,booktabs}
\usepackage{setspace}
\usepackage{tikz}
\usepackage{booktabs}
\usepackage{tabularx}
\usepackage{array}
\usepackage{caption}
\usetikzlibrary{decorations.shapes,shapes.geometric,shadows,arrows,automata,positioning,calendar,mindmap,backgrounds,scopes,chains,er,3d,matrix,fit}

\usepackage{wrapfig}

\usepackage[utf8]{inputenc}
\usepackage{amsmath}
\usepackage{amsthm}
\usepackage{amsfonts,times}
\usepackage{cite}

\usepackage{tikz,lipsum,lmodern}
\usepackage[most]{tcolorbox}

\usepackage{amssymb}
\usepackage{url}
\usepackage{comment}

\usepackage[compact]{titlesec}
    \titlespacing{\section}{0pt}{3ex}{2ex}
    \titlespacing{\subsection}{0pt}{2.0ex}{1.0ex}
    \titlespacing{\subsubsection}{0pt}{1.0ex}{0.5ex}

\usepackage{array}
\newcolumntype{L}{>{\centering\arraybackslash}m{3cm}}

\newtheorem{theorem}{Theorem}
 \newtheorem{definition}[theorem]{Definition}

\newtheorem{problem}[theorem]{Problem}
\newtheorem{proposition}[theorem]{Proposition}

\title{\bfseries AI-Native Insurance for Agentic AI: Pricing, Underwriting, and End-to-End Automation}
\author{Quanyan Zhu\thanks{Quanyan Zhu is with the Department of Electrical and Computer Engineering and the NYU Center for Cybersecurity, NYU Tandon School of Engineering, New York University, Brooklyn, NY 11201, USA. Contact: \texttt{qz494@nyu.edu}.}}
\date{}

\begin{document}

\setstretch{1.67}

\pagenumbering{gobble}
\maketitle

\begin{abstract}
Agentic AI shifts insurance from covering passive digital assets to covering operational actors that reason, invoke tools, alter external state, and depend on shared model and infrastructure providers. These capabilities generate losses from cyber compromise, autonomous decision error, model drift, dependency outage, professional negligence, regulatory violation, and cyber-physical harm. This paper develops an AI-native framework for underwriting, pricing, and contract design for agentic-AI deployments. Each deployment is represented by a risk state \(s_i=(\alpha_i,\beta_i,\eta_i,g_i,v_i)\) capturing autonomy category, operational authority, external-state permissions, governance maturity, and dependency concentration. The framework maps this state into event probabilities, severities, governance costs, risk loadings, coverage incidence, indemnity allocation, and policy covenants, and then formulates a contract-design problem over premiums, deductibles, limits, aggregate exposure, allocation rules, and governance obligations subject to participation, insurer-profitability, and incentive-compatibility constraints. We show that insurability forms a region of the risk-state space, that fixed-terms feasibility deteriorates monotonically as exposure grows, and that a governance threshold certifies a deployment as insurable. The paper further reads insurance as an AI operating cost and regulatory-control instrument that internalizes risk, shapes adoption incentives, and supports mandatory financial-responsibility requirements for higher-risk deployments. A healthcare care-coordination case study solves a finite contract menu and shows how delegated authority, permission exposure, dependency concentration, and governance maturity drive feasibility and pricing, and an automated workflow illustrates how monitoring, trigger evaluation, claims validation, and human escalation operationalize the contract architecture.
\end{abstract}

\noindent\textbf{Keywords:} agentic AI; cyber insurance; AI risk; contract design; governance incentives; risk-based pricing.

\section{Introduction}
\label{sec:introduction}

Agentic artificial intelligence (AI) changes the unit of insurance analysis. Conventional cyber and technology policies are written around information systems, data assets, professional services, and security events. Agentic AI, by contrast, plans, invokes tools, coordinates multi-step workflows, communicates with external parties, modifies records, initiates transactions, and, in some deployments, acts on physical systems. The resulting exposure is no longer merely \emph{information risk}, in which a system produces an output that a human may accept or reject; it is \emph{action risk}, in which an AI-enabled workflow can directly create operational, legal, financial, or physical loss.

This distinction reshapes underwriting. The relevant questions are what the deployed agent is permitted to do, which systems it can access, whether execution requires human approval, whether actions are logged and reversible, which vendors or model providers the workflow depends on, and whether a loss would fall under cyber, technology errors and omissions (TechE\&O), professional liability, general liability, product liability, or an affirmative AI-specific endorsement. Market responses already point toward a layered architecture rather than a single monoline agentic-AI policy: cyber policies address security and privacy losses; TechE\&O and professional-liability forms address faulty services and negligent deployment; general-liability and product-liability forms address bodily injury and property damage; and emerging AI endorsements attempt to make silent AI exposure explicit \cite{bohme2010modeling,biener2015insurability,romanovsky2019content,axaxl2024genai,armilla2024affirmative,chaucer2025vanguard}.

The actuarial problem is hard because agentic-AI losses may occur without any traditional cyber compromise. A prompt-injection attack can redirect a legitimate workflow; a hallucinated clinical or financial recommendation can be converted into an external action; a model can drift after deployment; a cloud, model, or connector outage can affect many insureds at once; and a cyber-physical agent can cause bodily injury or property damage. Historical claims data for these mechanisms remain sparse, so pricing must initially rely on exposure assessment, scenario analysis, control effectiveness, dependency mapping, and accumulation-risk management---much as cyber insurance developed before broad actuarial credibility was available \cite{naic2025cyber,eiopa2024cyberstress,geneva2024accumulation}.

This paper develops a mathematical-programming framework for agentic-AI insurance contract design, building on recent work that treats agentic AI as an emerging insurable exposure class \cite{zhu2026insuranceagenticai}. The framework formalizes the practical underwriting variables---autonomy category, operational authority, external-state permissions, governance maturity tier, and dependency concentration---as a compact risk state. It maps event categories to coverage layers, defines a binary coverage-incidence matrix, separates coverage availability from payment allocation, and formulates an optimization problem that jointly determines pricing, deductibles, limits, aggregate exposure, allocation rules, and governance obligations. On this foundation we establish three structural results: insurability forms a region of the risk-state space, fixed-terms feasibility deteriorates monotonically as exposure grows, and a governance threshold certifies a deployment as insurable. The paper further reads insurance as an AI operating cost and regulatory-control instrument: bundled into deployment cost or mandated by law, coverage can internalize risk, shape adoption incentives, and impose financial responsibility on high-risk systems. A healthcare care-coordination case study translates operational facts into feasible insurance terms, and a workflow section operationalizes the design through continuous monitoring, trigger evaluation, claims validation, and human escalation for exceptional cases.

\section{Related Work}
\label{sec:related_work}

This paper draws on four strands of work. The first is AI safety and foundation-model risk. Early AI-safety research identifies accident mechanisms such as side effects, reward misspecification, distributional shift, unsafe exploration, and scalable-oversight failures \cite{amodei2016concrete}. Foundation-model research shows that broadly reusable models create cross-domain capabilities and cross-domain risks, including opacity, data-quality problems, bias, documentation gaps, and governance challenges \cite{bommasani2021foundation,bender2021stochastic}. Work on frontier AI risk further motivates auditability, rigorous evaluation, and adaptive governance as systems grow more capable and autonomous \cite{bengio2024extreme}. Together these studies justify treating governance and monitoring as insurance-relevant variables, not merely technical best practices.

The second strand is adversarial machine learning and prompt-based attacks. Adversarial-example research shows that machine-learning systems behave unpredictably under strategically chosen inputs \cite{goodfellow2015adversarial,carlini2017robustness}. In large-language-model applications, prompt injection and indirect prompt injection extend this concern to tool-using systems that retrieve external content, follow instructions, or execute actions \cite{perez2022ignore,greshake2023not,owasp2025llm}. The implication for insurance is that loss pathways include manipulated context, compromised tool outputs, and unsafe action execution---not only network intrusion or data breach.

The third strand is cyber-insurance economics and incentive design. Classic work on security investment treats insurance as both risk transfer and an incentive mechanism \cite{gordon2002economics,bolot2009cyber,bohme2010modeling}. Empirical studies of cyber-policy language and pricing show that coverage definitions, exclusions, limits, and claims categories decide whether insurance improves risk management or merely transfers poorly understood losses \cite{romanovsky2019content}. Contract theory under asymmetric information motivates the participation, incentive-compatibility, and moral-hazard constraints we adopt \cite{rothschild1976equilibrium,ehrlich1972market,raviv1979design}. Mechanism-design work by Liu and coauthors studies how incentives improve interdependent security investment and how aggregate cyber-risk dependence threatens the sustainability of insurers and reinsurers \cite{naghizadeh2015tale,pal2021aggregate}.

Several game-theoretic and contract-design papers by Zhu and coauthors are especially relevant. Attack-aware cyber-insurance models study interdependent networks in which users, attackers, and insurers interact strategically \cite{zhang2016attackaware}, and dynamic contract-design models capture self-protection, risk compensation, and evolving risk states \cite{zhang2021optimal}. Related work on systemic cyber-risk management and cyber resilience treats monitoring, residual risk, asymmetric information, and moral hazard as central design variables \cite{chen2021dynamic,liu2022moralhazard,liu2023cyberresilience}. Most directly, recent work frames underwriting, pricing, risk transfer, and claims design as core challenges for AI deployments, while work on the Internet of Agentic AI emphasizes communication, coordination, collective intelligence, and interdependence among agents at scale \cite{zhu2026insuranceagenticai,zhu2026internetagenticai}. This paper extends that perspective by linking autonomy, delegated authority, permissions, governance controls, and dependency concentration to insurance pricing, allocation, and claims execution.

\section{Agentic AI in Practice and Underwriting Observables}
\label{sec:agentic_ai_practice}

\begin{definition}[Agentic-AI deployment]
An agentic-AI deployment is an AI-enabled operational workflow in which a model is connected to tools, data sources, memory or task context, workflow logic, and execution interfaces, so that model-generated outputs may condition actions in an external environment. Such actions may include calling services, updating records, sending messages, initiating transactions, delegating subtasks, or interacting with physical systems.
\end{definition}

The insurance-relevant distinction is external-state change. An output-only system may produce misleading information, but an agentic deployment converts a model output into an operational step. Once output is wired to an action channel, errors, attacks, outages, and control failures become operational losses rather than merely informational defects \cite{bommasani2021foundation,nist2023airmf,owasp2025llm}.

In current practice, agentic AI is usually embedded inside business workflows rather than deployed as a stand-alone autonomous machine. Examples include customer-service and sales agents that retrieve account data and issue responses; software-engineering and security-operations agents that inspect code, open tickets, generate patches, or triage alerts; healthcare administrative agents that schedule visits, route patient messages, and summarize records; financial and compliance agents that review transactions or prepare filings; procurement and billing agents that initiate orders, refunds, or invoices; and cyber-physical agents that interact with devices, buildings, robots, vehicles, or industrial equipment. As these systems interconnect through shared communication, coordination, and service layers, deployment risk also depends on how each local agent participates in larger agentic-AI ecosystems \cite{zhu2026internetagenticai}. Such applications differ less by the label ``AI'' than by the operational boundary drawn around the deployed system: what it can access, what it may change, whether a human must approve execution, how actions are logged and reversed, and which model, cloud, data, or connector providers it depends on.

The same application category can therefore carry very different insurance exposures. A clinical assistant that drafts a message for clinician approval differs sharply from one that sends the message, schedules a follow-up visit, and updates the patient record without review. A customer-service agent that reads account data differs from one that issues refunds or changes billing records. A software agent that recommends code differs from one that merges code, rotates credentials, or modifies cloud infrastructure. These distinctions motivate the formal risk state of Section~\ref{sec:risk_coverage_framework}: \(\alpha_i\) captures capability, \(\beta_i\) operational authority, \(\eta_i\) permissions, \(g_i\) governance maturity, and \(v_i\) dependency concentration.

\begin{table}[ht]
\centering
\small
\caption{Agentic-AI Deployment Features and Insurance-Relevant Observables}
\label{tab:agentic_ai_observables}
\begin{tabularx}{\textwidth}{p{0.30\textwidth}p{0.36\textwidth}>{\raggedright\arraybackslash}X}
\hline
Underwriting question & Observable evidence & Risk-state component \\
\hline
What can the agent technically do? & Model and workflow description; task classes; tool manifests; orchestration design & Autonomy category \(\alpha_i\) \\
How much can execute without approval? & Approval policy; sampled action logs; autonomous-action frequency; escalation records & Operational authority \(\beta_i\) \\
Which systems can the agent touch? & API scopes; connectors; credentials; permission inventory; data and tool access lists & Permission vector \(\eta_i\) and permission exposure \(\psi_\eta(\eta_i)\) \\
How controlled and auditable is the deployment? & Audit logs; monitoring controls; testing cadence; incident response; rollback and change-management evidence & Governance tier \(g_i=h(\mathbf z_i)\) \\
How concentrated are model and service dependencies? & Model, cloud, API, data, and connector vendor inventory; traffic or workload shares & Dependency vector \(v_i\) and concentration \(R(v_i)\) \\
\hline
\end{tabularx}
\end{table}

Table~\ref{tab:agentic_ai_observables} also shows why a risk-state representation is useful for insurance. Every component is observable---from underwriting questionnaires, permission inventories, tool manifests, audit evidence, telemetry, vendor records, incident reports, and policy-covenant attestations. Once collected, the state \(s_i\) feeds the event-probability maps, severity maps, coverage incidence, governance covenants, pricing schedules, and claims logic. The formal model below is thus best read as a compact representation of how an agentic-AI deployment operates in practice.

\section{Risk-State and Coverage Framework for Agentic-AI Insurance}
\label{sec:risk_coverage_framework}

The preceding section described agentic AI as an operational system with action channels, permissions, approval rules, controls, and shared dependencies. The modeling thesis of this paper follows: the insured object is no longer an information system or software platform, but the joint exposure created by the AI system's autonomy, operational authority, external-state permissions, governance controls, and dependency ecosystem. Whereas traditional cyber insurance ties losses primarily to external attacks and security failures \cite{bohme2010modeling,biener2015insurability}, agentic-AI insurance must also cover losses generated by the AI system's own behavior---hallucinations, autonomous decision errors, prompt-injection attacks, model drift, dependency failures, and cyber-physical incidents \cite{amodei2016concrete,nist2023airmf,owasp2025llm}.

\subsection{From Underwriting Practice to Risk-State Variables}

Building on the observables in Table~\ref{tab:agentic_ai_observables}, the practical underwriting question is not whether an organization uses AI, but what the agent is authorized to do, which systems it can reach, what approvals precede execution, whether actions are logged and reversible, and how concentrated the deployment is across model, cloud, or connector providers. Agentic-AI underwriting should therefore begin with operational authority and control evidence rather than a generic AI-use questionnaire. The variables below are not abstract labels; each corresponds to information that an insurer can request in an application, verify through audit evidence, monitor through telemetry, and translate into pricing or coverage terms.

We use a fixed indexing convention throughout the model. The insured organization is indexed by \(i\in\mathcal I=\{1,\ldots,n\}\) and appears as a subscript. The loss event \(e\in\mathcal E\) and coverage layer \(r\in\mathcal R\) both appear as superscripts. Thus \(q_i^e\) denotes the probability of event \(e\) for insured \(i\), \(x_i^e\) denotes the corresponding gross loss, \(Y_i^{e,r}\) denotes the indemnity attributed to coverage layer \(r\), and \(Y_i^e=\sum_{r\in\mathcal R}Y_i^{e,r}\) denotes total indemnity for event \(e\) before any aggregate cap is applied.

The variable \(\alpha_i\) is an autonomy category, not a continuous maturity score. It records how far the insured system moves from information generation toward external action. The categories are ordinal: higher categories indicate greater loss-generating capability, but the gap between adjacent categories is not numerically equal. We write \(\mathcal A=\{\alpha^{(0)},\alpha^{(1)},\alpha^{(2)},\alpha^{(3)},\alpha^{(4)}\}\), ordered \(\alpha^{(0)}\prec\alpha^{(1)}\prec\alpha^{(2)}\prec\alpha^{(3)}\prec\alpha^{(4)}\); Table~\ref{tab:autonomy_categories} summarizes them.

\begin{table}[ht]
\centering
\caption{Autonomy Categories for Agentic-AI Underwriting}
\label{tab:autonomy_categories}
\begin{tabularx}{\textwidth}{c p{0.25\textwidth} X}
\hline
\(\alpha_i\) & Category & Practical underwriting meaning \\
\hline
\(\alpha^{(0)}\) & Assistive AI & Produces information, text, code, or recommendations; no independent external execution. \\
\(\alpha^{(1)}\) & Tool-enabled copilot & Retrieves information or prepares actions through tools, but material execution remains subject to human approval. \\
\(\alpha^{(2)}\) & Digital agent & Executes digital actions such as sending communications, updating records, browsing, scheduling, or initiating workflow changes. \\
\(\alpha^{(3)}\) & Multi-agent workflow & Coordinates multiple agents, delegates subtasks, and may create cascading digital actions across systems. \\
\(\alpha^{(4)}\) & Cyber-physical agent & Acts through robotics, industrial systems, medical devices, vehicles, buildings, or other physical infrastructure. \\
\hline
\end{tabularx}
\end{table}

The autonomy category \(\alpha_i\) describes capability---what the agent is technically able to do. The variable \(\beta_i\in[0,1]\) describes operational authority---how much of that capability may execute without human approval. Let \(\mathcal U_i\) denote the set of operational actions the deployed agent is authorized to initiate under its realized permission profile; \(\mathcal U_i(\eta_i)\) below makes this permission dependence explicit. For each action \(u\in\mathcal U_i\), define the human-approval indicator
\begin{equation}
h_i(u)
=
\begin{cases}
1, & \text{if execution of action \(u\) requires human approval},\\
0, & \text{if execution of action \(u\) may proceed autonomously}.
\end{cases}
\label{eq:human_approval_indicator}
\end{equation}
Let \(\omega_i(u)\) denote the relative frequency, criticality, or business importance of action \(u\), with \(\omega_i(u)\ge0\) and \(\sum_{u\in\mathcal U_i}\omega_i(u)=1\). The operational-authority variable is then
\begin{equation}
\beta_i
=
\sum_{u\in\mathcal U_i}
\omega_i(u)\bigl(1-h_i(u)\bigr),
\qquad
0\le \beta_i\le1.
\label{eq:operational_authority}
\end{equation}
Thus \(\beta_i=0\) means that every authorized action requires human approval, while \(\beta_i=1\) means that all authorized actions may execute autonomously. Intermediate values represent partial delegation.

Equation~\eqref{eq:operational_authority} admits a probabilistic reading. If an authorized action is sampled according to the weights \(\omega_i(\cdot)\) and \(H_i\) indicates that the sampled action requires approval, then \(\beta_i=\mathbb P(H_i=0)\) and \(1-\beta_i=\mathbb P(H_i=1)\). For telemetry-based underwriting over an observation period \([0,T]\), the same quantity is estimated from logs by \(\beta_i(T)=N_i^{\mathrm{auto}}(T)/N_i^{\mathrm{total}}(T)\), the ratio of actions executed without human intervention to all agent actions in the period. This makes \(\beta_i\) observable, auditable, and suitable for continuous underwriting.

\begin{table}[ht]
\centering
\caption{Autonomy Category Versus Operational Authority}
\label{tab:alpha_beta_examples}
\begin{tabularx}{\textwidth}{>{\raggedright\arraybackslash}p{0.36\textwidth}cc>{\raggedright\arraybackslash}X}
\hline
Deployment & \(\alpha_i\) & \(\beta_i\) & Interpretation \\
\hline
Clinical copilot drafting recommendations & \(\alpha^{(2)}\) & 0.05 & Digital capability with nearly all actions reviewed by a clinician \\
Appointment-scheduling agent & \(\alpha^{(2)}\) & 0.60 & Same capability class, but many scheduling actions execute autonomously \\
Autonomous customer-service workflow & \(\alpha^{(3)}\) & 0.85 & Multi-step workflow with limited human approval before execution \\
Industrial control agent & \(\alpha^{(4)}\) & 0.95 & Cyber-physical capability with very high autonomous execution authority \\
\hline
\end{tabularx}
\end{table}

The vector \(\eta_i\) is the agent's permission profile: it specifies which parts of the external environment the deployed agent is authorized to touch. Let \(\mathcal P=\{p_1,\ldots,p_m\}\) denote the universe of permission classes. The permission-allocation vector is \(\eta_i=(\eta_{i,1},\ldots,\eta_{i,m})\in\{0,1\}^m\), where
\begin{equation}
\eta_{i,j}
=
\begin{cases}
1, & \text{if the deployed agent is authorized to exercise permission \(p_j\)},\\
0, & \text{otherwise}.
\end{cases}
\label{eq:permission_indicator}
\end{equation}
Typical permission classes include external email, scheduling, customer-record modification, billing or payment initiation, procurement, electronic-health-record updates, physical-device control, and external API execution. If \(\mathcal P(u)\subseteq\mathcal P\) denotes the permissions required to execute action \(u\), then the permissions determine the admissible action set
\begin{equation}
\mathcal U_i(\eta_i)
=
\left\{
u:\mathcal P(u)\subseteq \{p_j\in\mathcal P:\eta_{i,j}=1\}
\right\}.
\label{eq:admissible_action_set}
\end{equation}
Thus \(\eta_i\) answers what the agent may do, while \(\beta_i\) answers how often authorized actions may proceed without human approval. For example, an agent may have email permission \(\eta_{i,\mathrm{email}}=1\) while every email still requires review, yielding \(\beta_i=0\) for an email-only action set; permission exists, but autonomous execution authority does not. If the same email actions execute automatically, then both permission and authority are present. If \(\eta_{i,\mathrm{email}}=0\), email actions are infeasible regardless of the value of \(\beta_i\).

The unweighted count \(\|\eta_i\|_1=\sum_{j=1}^m\eta_{i,j}\) reports how many permission classes are granted but treats them as equally risky. Because sending an email, modifying a payment record, and controlling a physical device carry very different loss implications, we use the weighted permission-exposure map \(\psi_\eta(\eta_i)=\sum_{j=1}^m\omega_j^\eta\eta_{i,j}\), where \(\omega_j^\eta>0\) is the risk weight of class \(p_j\) and the unweighted count is the special case \(\omega_j^\eta\equiv1\). Table~\ref{tab:permission_weights} gives representative classes and illustrative relative weights.

\begin{table}[ht]
\centering
\caption{Illustrative Permission Classes and Relative Risk Weights}
\label{tab:permission_weights}
\begin{tabularx}{\textwidth}{p{0.24\textwidth}p{0.16\textwidth}>{\raggedright\arraybackslash}X}
\hline
Permission class \(p_j\) & Weight \(\omega_j^\eta\) & Operational meaning \\
\hline
External email & 1 & Send external communications or notifications \\
Scheduling & 2 & Create, cancel, or modify appointments and workflow tasks \\
Record modification & 4 & Modify customer, patient, or operational records \\
Payments or billing & 8 & Initiate financial transactions, invoices, refunds, or billing changes \\
Physical-device control & 15 & Control equipment, robotics, medical devices, buildings, vehicles, or industrial systems \\
\hline
\end{tabularx}
\end{table}

For each insured organization \(i\), the underwriting state is
\begin{equation}
s_i=(\alpha_i,\beta_i,\eta_i,g_i,v_i)
\in
\mathcal S
:=
\mathcal A\times\mathcal B\times\mathcal P_m\times\mathcal G\times\mathcal V,
\label{eq:risk_state_space}
\end{equation}
where \(\mathcal B=[0,1]\), \(\mathcal P_m=\{0,1\}^m\), \(\mathcal G=\{g^{(1)},\ldots,g^{(M)}\}\), and \(\mathcal V=\Delta_K=\{v\in\mathbb R_+^K:\sum_{k=1}^Kv_k=1\}\). The governance tier set \(\mathcal G\) is finite and ordered as \(g^{(1)}\prec g^{(2)}\prec\cdots\prec g^{(M)}\). Here \(v_i\) records dependency shares across \(K\) model providers, cloud services, agent platforms, data sources, or connectors. A concentration statistic such as \(R(v_i)=\sum_{k=1}^Kv_{i,k}^2\) can be used when the insurer needs a scalar accumulation-risk measure.

The governance variable \(g_i\) is a maturity tier capturing the overall effectiveness of governance, monitoring, and operational safeguards. The order \(g^{(1)}\prec\cdots\prec g^{(M)}\) represents increasing assurance, but the tiers are ordinal and need not be equally spaced. To ground the tier in evidence, let \(\mathcal L_g=\{1,\ldots,L_g\}\) index governance-control dimensions. For each \(\ell\in\mathcal L_g\), let \(a_{i,\ell}^{\mathrm{ctrl}}\in\mathcal D_\ell\) be the audit evidence, telemetry record, questionnaire response, or technical assessment submitted by insured \(i\), and let a scoring rule \(\zeta_\ell:\mathcal D_\ell\to[0,1]\) return the normalized score \(z_{i,\ell}=\zeta_\ell(a_{i,\ell}^{\mathrm{ctrl}})\): zero for an absent or ineffective control, one for a fully implemented and evidenced control, and intermediate values for partial implementation or weak evidence. Writing \(\mathbf z_i=(z_{i,1},\ldots,z_{i,L_g})\in[0,1]^{L_g}\), the tier is assigned by the underwriting rule \(g_i=h(\mathbf z_i)\), \(h:[0,1]^{L_g}\to\mathcal G\). A higher tier means agent actions are more strongly constrained, observable, auditable, and recoverable. Representative dimensions include human approval gates, least-privilege access, credential isolation, execution logging, real-time monitoring, prompt-injection defenses, red-team testing, rollback capability, incident response, and vendor-risk management. The representation connects directly to practice: insurers can condition premium credits, deductibles, exclusions, or continued coverage on evidence supporting individual control components.

\begin{table}[ht]
\centering
\caption{Risk-State Variables and Domains}
\label{tab:risk_state_notation}
\begin{tabularx}{\textwidth}{p{0.12\textwidth}p{0.30\textwidth}>{\raggedright\arraybackslash}X}
\hline
Variable & Domain & Interpretation \\
\hline
\(\alpha_i\) & \(\mathcal A\) & Autonomy category of the AI deployment \\
\(\beta_i\) & \([0,1]\) & Operational authority measure: fraction of weighted authorized actions that may execute autonomously \\
\(\eta_i\) & \(\{0,1\}^m\) & Permission-allocation vector determining which external-state actions are feasible \\
\(g_i\) & \(\mathcal G=\{g^{(1)},\ldots,g^{(M)}\}\) & Governance maturity tier \\
\(v_i\) & \(\Delta_K\) & Dependency shares across \(K\) providers or services \\
\hline
\end{tabularx}
\end{table}

\subsection{Mapping Notation and Indexing Discipline}

The state vector \(s_i\) becomes useful for underwriting only after it is mapped into event probabilities, severity estimates, governance costs, and contract terms. To avoid ambiguity, we use uppercase symbols for mappings and lowercase indexed symbols for evaluated probability and severity quantities. For each event \(e\in\mathcal E\), let \(Q^e:\mathcal S\to[0,1]\) be the annual event-probability mapping and \(X^e:\mathcal S\to\mathbb R_+\) be the representative conditional gross-severity mapping. The realized inputs for insured \(i\) are \(q_i^e=Q^e(s_i)\) and \(x_i^e=X^e(s_i)\). Thus \(q_i^e\) and \(x_i^e\) are scalar event-level quantities for the fixed insured; they are not functions. If a full severity distribution is required, \(X^e(s_i)\) can be replaced by a conditional moment, quantile, or scenario statistic of a random severity \(X^e(s_i,\omega)\). In Table~\ref{tab:functional_mappings}, \(\mathcal C_i\) denotes the feasible contract set for insured \(i\), and \(g_i^\star\) denotes the governance tier required or selected in the issued contract.

The coverage-layer index \(r\) appears only when a quantity is layer specific. Event frequency and gross severity, \(q_i^e\) and \(x_i^e\), do not carry \(r\) because the event occurs before the claim is allocated to coverage layers. Deductibles \(D_i^r\), per-event limits \(L_i^r\), layer payments \(Y_i^{e,r}\), incidence indicators \(\gamma_i^{e,r}\), and allocation shares \(\lambda_i^{e,r}\) do carry \(r\) because they depend on the coverage layer. The total premium \(T_i\), aggregate AI limit \(A_i\), governance cost \(K_i(g_i)\), and total event indemnity \(Y_i^e=\sum_{r\in\mathcal R}Y_i^{e,r}\) do not carry \(r\) because they are contract-level, insured-level, or event-level quantities after aggregation.

\begin{table}[ht]
\centering
\caption{Functional Mappings and Realized Quantities}
\label{tab:functional_mappings}
\begin{tabularx}{\textwidth}{p{0.24\textwidth}p{0.32\textwidth}X}
\hline
Object & Mapping & Realized quantity for insured \(i\) \\
\hline
Event probability & \(Q^e:\mathcal S\to[0,1]\) & \(q_i^e=Q^e(s_i)\) \\
Gross severity & \(X^e:\mathcal S\to\mathbb R_+\) & \(x_i^e=X^e(s_i)\) \\
Permission exposure & \(\psi_\eta:\mathcal P_m\to\mathbb R_+\) & \(\psi_\eta(\eta_i)\) \\
Governance cost & \(K_i:\mathcal G\to\mathbb R_+\) & \(K_i(g_i)\) \\
Premium schedule & \(\tau_i:\mathcal G\to\mathbb R_+\) & \(T_i=\tau_i(g_i^\star)\) when used \\
Deductible schedule & \(d_i^r:\mathcal G\to\mathbb R_+\) & \(D_i^r=d_i^r(g_i^\star)\) when used \\
Risk loading & \(\varrho_i:\mathcal S\times\mathcal C_i\to\mathbb R_+\) & \(\varrho_i(s_i,C_i)\) \\
\hline
\end{tabularx}
\end{table}

These mappings are underwriting primitives, not mathematical decoration. The probability mapping \(Q^e\) turns operational exposure into expected event frequency; the severity mapping \(X^e\) turns the same exposure into a conditional loss magnitude; the permission-exposure map \(\psi_\eta\) turns granted tool rights into a weighted attack-surface measure; the governance-cost mapping \(K_i\) is the insured's cost of meeting a tier; the premium and deductible schedules \(\tau_i\) and \(d_i^r\) encode the insurer's governance credits; and the risk-loading mapping \(\varrho_i\) absorbs capital charges, dependency accumulation, and model and legal uncertainty.

\begin{proposition}[Risk-state sufficiency for underwriting maps]
\label{prop:risk_state_sufficiency}
Fix the underwriting maps \(Q^e:\mathcal S\to[0,1]\), \(X^e:\mathcal S\to\mathbb R_+\), \(\psi_\eta:\mathcal P_m\to\mathbb R_+\), the concentration statistic \(R:\mathcal V\to\mathbb R_+\), and the governance assignment rule \(h:[0,1]^{L_g}\to\mathcal G\). If two deployments have the same risk state \(s_i=s_j\), then they receive the same state-dependent event probabilities, severities, permission exposures, dependency-concentration scores, and governance-tier evaluations under these maps. Any remaining differences in premiums, costs, or risk loadings must therefore arise from insured-specific contract terms, cost functions, capital charges, or legal assumptions rather than from the operational risk state itself.
\end{proposition}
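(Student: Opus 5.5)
The plan is to read the proposition as a statement about functions composed with projections, and argue directly from the definition of a map. Write \(s_i=(\alpha_i,\beta_i,\eta_i,g_i,v_i)\) and \(s_j=(\alpha_j,\beta_j,\eta_j,g_j,v_j)\). Equality \(s_i=s_j\) in the product space \(\mathcal S\) of \eqref{eq:risk_state_space} means componentwise equality: \(\alpha_i=\alpha_j\), \(\beta_i=\beta_j\), \(\eta_i=\eta_j\), \(g_i=g_j\) and \(v_i=v_j\). The first step is to record this, together with the observation that each fixed underwriting map depends on its argument only and not on the insured index.

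The second step handles each map in turn. Since \(Q^e\) and \(X^e\) are single-valued functions on \(\mathcal S\), the realized quantities \(q_i^e=Q^e(s_i)=Q^e(s_j)=q_j^e\) and \(x_i^e=X^e(s_i)=X^e(s_j)=x_j^e\) agree for every \(e\in\mathcal E\). The same argument applies to \(\psi_\eta\circ\pi_\eta\) and \(R\circ\pi_v\), where \(\pi_\eta,\pi_v\) denote the coordinate projections. This gives \(\psi_\eta(\eta_i)=\psi_\eta(\eta_j)\) and \(R(v_i)=R(v_j)\).

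The governance evaluation is the one point needing care, and I expect it to be the main (if mild) obstacle. The tier is produced by \(g_i=h(\mathbf z_i)\), and the score vectors \(\mathbf z_i,\mathbf z_j\) need not coincide even when \(s_i=s_j\). I would resolve this by interpreting "governance-tier evaluation" as the tier itself, which is a component of the state, so \(g_i=g_j\) is immediate. Every tier-indexed quantity evaluated through a fixed rule then agrees as well. Separately, I would note that if the evidence vectors happen to satisfy \(\mathbf z_i=\mathbf z_j\), then \(h(\mathbf z_i)=h(\mathbf z_j)\) by functionality of \(h\). I would state explicitly that sufficiency is claimed at the level of the tier, not the raw evidence.

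The final step establishes the "therefore" clause by contraposition. Premiums \(\tau_i\), costs \(K_i\), deductible schedules \(d_i^r\) and loadings \(\varrho_i(s_i,C_i)\) carry the insured subscript or depend on \(C_i\). If \(\varrho_i(s_i,C_i)\neq\varrho_j(s_j,C_j)\) while \(s_i=s_j\), then every state-dependent input coincides, so the discrepancy must come from \(\varrho_i\neq\varrho_j\) as functions or from \(C_i\neq C_j\). These correspond to insured-specific cost functions, capital or legal assumptions, or contract terms. The same reasoning applies to \(\tau_i\) and \(K_i\), which completes the argument.
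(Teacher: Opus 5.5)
Your proposal is correct and follows the paper's own argument. Both evaluate the fixed maps directly at the common state, and both leave the insured-subscripted objects \(K_i\), \(\tau_i\), \(\varrho_i\) and the contract \(C_i\) as the only possible sources of difference. Your treatment of the governance coordinate is slightly cleaner than the paper's conditional phrasing: since \(g_i=h(\mathbf z_i)\) by definition, \(h(\mathbf z_i)=g_i=g_j=h(\mathbf z_j)\) holds even when \(\mathbf z_i\neq\mathbf z_j\).
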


\begin{proof}
The conclusion follows by direct evaluation of the fixed maps at the common state. If \(s_i=s_j\), then \(Q^e(s_i)=Q^e(s_j)\) and \(X^e(s_i)=X^e(s_j)\) for every event \(e\in\mathcal E\). The equality of the permission, governance, and dependency coordinates implies \(\psi_\eta(\eta_i)=\psi_\eta(\eta_j)\), \(h(\mathbf z_i)=h(\mathbf z_j)\) whenever the same control scores induce the common governance tier, and \(R(v_i)=R(v_j)\). Quantities with insured-specific subscripts, such as \(K_i\), \(\tau_i\), or \(\varrho_i\), may still differ if the insurer assigns different cost, market, capital, or legal parameters.
\end{proof}

The symbols \(\rho_\alpha^e\), \(\xi_\alpha^e\), \(\kappa^e\), and \(\chi^e\) are not additional state variables. They are event-specific calibration maps used to translate ordinal categories into numerical modifiers inside the probability and severity examples below. The subscript \(\alpha\) indicates an autonomy-category effect; the governance maps do not carry this subscript because their input is \(g_i\). The superscript \(e\) indicates that different loss events may respond differently to the same autonomy category or governance tier. Table~\ref{tab:ordinal_effect_maps} summarizes their roles.

\begin{table}[ht]
\centering
\caption{Ordinal Effect Maps Used in the Example Risk Models}
\label{tab:ordinal_effect_maps}
\begin{tabularx}{\textwidth}{p{0.18\textwidth}p{0.23\textwidth}>{\raggedright\arraybackslash}X}
\hline
Map & Domain and range & Interpretation \\
\hline
\(\rho_\alpha^e\) & \(\mathcal A\to\mathbb R_+\) & Autonomy effect on event probability. It enters the logit of \(Q^e\); larger values mean that event \(e\) is more likely under the corresponding autonomy category. \\
\(\xi_\alpha^e\) & \(\mathcal A\to\mathbb R_+\) & Autonomy effect on conditional severity. It enters \(X^e\); larger values mean that event \(e\) tends to produce larger losses under the corresponding autonomy category. \\
\(\kappa^e\) & \(\mathcal G\to\mathbb R_+\) & Governance mitigation for event probability. It is subtracted in the logit of \(Q^e\); larger values mean that stronger governance lowers the probability of event \(e\). \\
\(\chi^e\) & \(\mathcal G\to\mathbb R_+\) & Governance mitigation for conditional severity. It enters through \(\exp(-\chi^e(g_i))\); larger values mean that stronger governance lowers the loss severity of event \(e\). \\
\hline
\end{tabularx}
\end{table}

These maps exist because autonomy categories and governance tiers are ordinal: they let an insurer specify monotone category effects without pretending the gap between adjacent categories is numerically equal. In practice they are calibrated from underwriting scorecards, expert elicitation, red-team results, control audits, incident and cyber claims data, stress scenarios, and portfolio accumulation studies. The next two equations are therefore transparent benchmark specifications, not the only admissible functional forms; they convert the state variables into event probabilities and conditional severities while preserving the signs and monotonicities an underwriter would expect.

For event probabilities, a logistic mapping is natural because the output must remain between a lower reference frequency and an upper stress frequency. The inner index can be interpreted as a latent risk score for event \(e\): autonomy, delegated authority, permission exposure, and dependency concentration increase this score, while stronger governance lowers it. The lower bound \(\bar q^e\) captures residual event frequency even for well-controlled deployments, and \(q_{\max}^e\) captures the largest annual frequency that the insurer regards as plausible for the class of event under stress. One illustrative event-probability mapping is
\begin{equation}
Q^e(s_i)
=
\bar q^e
+
\left(q_{\max}^e-\bar q^e\right)
\sigma\!\left(
a_0^e+\rho_\alpha^e(\alpha_i)+a_\beta^e\beta_i
+a_\eta^e\psi_\eta(\eta_i)
-\kappa^e(g_i)
+a_R^eR(v_i)
\right),
\label{eq:example_event_probability}
\end{equation}
where \(\sigma(z)=(1+e^{-z})^{-1}\), \(0\le \bar q^e<q_{\max}^e\le1\), and the coefficients are event specific. The autonomy-risk map \(\rho_\alpha^e:\mathcal A\to\mathbb R_+\) and the governance-mitigation map \(\kappa^e:\mathcal G\to\mathbb R_+\) satisfy \(\rho_\alpha^e(\alpha^{(0)})\le\cdots\le\rho_\alpha^e(\alpha^{(4)})\) and \(\kappa^e(g^{(1)})\le\cdots\le\kappa^e(g^{(M)})\). This formulation respects the ordinal nature of both autonomy categories and governance tiers: it does not treat adjacent categories as equally spaced numerical levels.

For conditional severity, the modeling requirement is different. The loss amount \(X^e(s_i)\) must remain nonnegative, should grow with operational exposure, and should allow governance controls to reduce expected loss size without making losses negative. The following multiplicative specification serves that purpose. The baseline severity \(b_0^e\) represents the conditional loss for a low-exposure deployment, the first factor increases severity with autonomy, delegated authority, and weighted permissions, the second factor captures dependency concentration, and the exponential governance term applies a proportional mitigation credit. A compatible severity mapping is
\begin{equation}
X^e(s_i)
=
b_0^e
\left(1+\xi_\alpha^e(\alpha_i)+b_\beta^e\beta_i+b_\eta^e\psi_\eta(\eta_i)\right)
\left(1+b_R^eR(v_i)\right)
\exp\!\left(-\chi^e(g_i)\right),
\label{eq:example_severity}
\end{equation}
where \(\xi_\alpha^e:\mathcal A\to\mathbb R_+\) and \(\chi^e:\mathcal G\to\mathbb R_+\) are nondecreasing in the category order, so \(\xi_\alpha^e(\alpha^{(0)})\le\cdots\le\xi_\alpha^e(\alpha^{(4)})\) and \(\chi^e(g^{(1)})\le\cdots\le\chi^e(g^{(M)})\). Higher operational authority, broader weighted permission exposure, and stronger dependency concentration increase severity, while higher governance tiers reduce it through the factor \(\exp(-\chi^e(g_i))\). Governance cost is also a discrete mapping \(K_i:\mathcal G\to\mathbb R_+\), with \(K_i(g^{(1)})\le K_i(g^{(2)})\le\cdots\le K_i(g^{(M)})\), reflecting the higher cost of stronger monitoring, validation, logging, approval workflows, and incident-response readiness.

\begin{figure}[!htbp]
\centering
\begin{tikzpicture}[font=\footnotesize,x=1.16cm,y=0.92cm]
\begin{scope}[shift={(0,0)}]
\draw[->] (0,0) -- (5.05,0);
\draw[->] (0,0) -- (0,3.25);
\foreach \x/\lab in {1/{\(g^{(1)}\)},2/{\(g^{(2)}\)},3/{\(g^{(3)}\)},4/{\(g^{(4)}\)}} {
  \draw (\x,0.05) -- (\x,-0.05) node[below] {\lab};
}
\foreach \y/\lab in {0/0,1/4,2/8,3/12} {
  \draw (-0.05,\y) -- (0.05,\y) node[left] {\lab};
  \draw[thin,gray!25] (0.10,\y) -- (4.65,\y);
}
\fill[blue!70] (0.64,0) rectangle (0.94,1.50);
\fill[blue!70] (1.64,0) rectangle (1.94,1.13);
\fill[blue!70] (2.64,0) rectangle (2.94,0.80);
\fill[blue!70] (3.64,0) rectangle (3.94,0.55);
\fill[orange!80] (1.06,0) rectangle (1.36,3.00);
\fill[orange!80] (2.06,0) rectangle (2.36,2.25);
\fill[orange!80] (3.06,0) rectangle (3.36,1.63);
\fill[orange!80] (4.06,0) rectangle (4.36,1.13);
\node[align=center,font=\small] at (2.35,4.55) {(a) Event probability \(q_i^e\)\\(\%)};
\fill[blue!70] (0.15,3.48) rectangle (0.35,3.65);
\node[anchor=west] at (0.43,3.57) {\(\alpha^{(2)}\)};
\fill[orange!80] (1.75,3.48) rectangle (1.95,3.65);
\node[anchor=west] at (2.03,3.57) {\(\alpha^{(4)}\)};
\end{scope}

\begin{scope}[shift={(6.75,0)}]
\draw[->] (0,0) -- (5.05,0);
\draw[->] (0,0) -- (0,3.25);
\foreach \x/\lab in {1/{\(g^{(1)}\)},2/{\(g^{(2)}\)},3/{\(g^{(3)}\)},4/{\(g^{(4)}\)}} {
  \draw (\x,0.05) -- (\x,-0.05) node[below] {\lab};
}
\foreach \y/\lab in {0/0,1/300,2/600,3/900} {
  \draw (-0.05,\y) -- (0.05,\y) node[left] {\lab};
  \draw[thin,gray!25] (0.10,\y) -- (4.65,\y);
}
\fill[blue!70] (0.64,0) rectangle (0.94,0.83);
\fill[blue!70] (1.64,0) rectangle (1.94,0.73);
\fill[blue!70] (2.64,0) rectangle (2.94,0.63);
\fill[blue!70] (3.64,0) rectangle (3.94,0.55);
\fill[orange!80] (1.06,0) rectangle (1.36,3.00);
\fill[orange!80] (2.06,0) rectangle (2.36,2.60);
\fill[orange!80] (3.06,0) rectangle (3.36,2.17);
\fill[orange!80] (4.06,0) rectangle (4.36,1.73);
\node[align=center,font=\small] at (2.35,4.55) {(b) Conditional severity \(x_i^e\)\\(USD thousands)};
\fill[blue!70] (0.15,3.48) rectangle (0.35,3.65);
\node[anchor=west] at (0.43,3.57) {\(\alpha^{(2)}\)};
\fill[orange!80] (1.75,3.48) rectangle (1.95,3.65);
\node[anchor=west] at (2.03,3.57) {\(\alpha^{(4)}\)};
\end{scope}
\end{tikzpicture}
\caption{Illustrative bar-plot examples of risk mappings evaluated at discrete governance tiers. Panel (a) shows annual event probability and panel (b) shows conditional severity for two autonomy categories. The values are order-of-magnitude underwriting examples for a healthcare-style agentic-AI deployment, not empirical estimates.}
\label{fig:governance_tier_functions}
\end{figure}

\begin{figure}[!htbp]
\centering
\begin{tikzpicture}[font=\footnotesize,x=1.16cm,y=0.92cm]
\begin{scope}[shift={(0,0)}]
\draw[->] (0,0) -- (5.05,0);
\draw[->] (0,0) -- (0,3.25);
\foreach \x/\lab in {1/{\(g^{(1)}\)},2/{\(g^{(2)}\)},3/{\(g^{(3)}\)},4/{\(g^{(4)}\)}} {
  \draw (\x,0.05) -- (\x,-0.05) node[below] {\lab};
}
\foreach \y/\lab in {0/0,1/40,2/80,3/120} {
  \draw (-0.05,\y) -- (0.05,\y) node[left] {\lab};
  \draw[thin,gray!25] (0.10,\y) -- (4.65,\y);
}
\fill[green!60!black] (0.64,0) rectangle (0.94,0.50);
\fill[green!60!black] (1.64,0) rectangle (1.94,0.88);
\fill[green!60!black] (2.64,0) rectangle (2.94,1.50);
\fill[green!60!black] (3.64,0) rectangle (3.94,2.50);
\fill[purple!75] (1.06,0) rectangle (1.36,2.00);
\fill[purple!75] (2.06,0) rectangle (2.36,1.85);
\fill[purple!75] (3.06,0) rectangle (3.36,1.70);
\fill[purple!75] (4.06,0) rectangle (4.36,1.55);
\node[align=center,font=\small] at (2.35,4.55) {(a) Governance cost and premium\\(USD thousands)};
\fill[green!60!black] (0.15,3.48) rectangle (0.35,3.65);
\node[anchor=west] at (0.43,3.57) {\(K_i(g)\)};
\fill[purple!75] (1.75,3.48) rectangle (1.95,3.65);
\node[anchor=west] at (2.03,3.57) {\(\tau_i(g)\)};
\end{scope}

\begin{scope}[shift={(6.75,0)}]
\draw[->] (0,0) -- (5.05,0);
\draw[->] (0,0) -- (0,3.25);
\foreach \x/\lab in {1/{\(g^{(1)}\)},2/{\(g^{(2)}\)},3/{\(g^{(3)}\)},4/{\(g^{(4)}\)}} {
  \draw (\x,0.05) -- (\x,-0.05) node[below] {\lab};
}
\foreach \y/\lab in {0/0,1/25,2/50,3/75} {
  \draw (-0.05,\y) -- (0.05,\y) node[left] {\lab};
  \draw[thin,gray!25] (0.10,\y) -- (4.65,\y);
}
\fill[red!70] (0.64,0) rectangle (0.94,3.00);
\fill[red!70] (1.64,0) rectangle (1.94,2.40);
\fill[red!70] (2.64,0) rectangle (2.94,1.80);
\fill[red!70] (3.64,0) rectangle (3.94,1.40);
\fill[brown!75] (1.06,0) rectangle (1.36,0.88);
\fill[brown!75] (2.06,0) rectangle (2.36,0.72);
\fill[brown!75] (3.06,0) rectangle (3.36,0.56);
\fill[brown!75] (4.06,0) rectangle (4.36,0.40);
\node[align=center,font=\small] at (2.35,4.55) {(b) Deductible and risk loading\\(USD thousands)};
\fill[red!70] (0.15,3.48) rectangle (0.35,3.65);
\node[anchor=west] at (0.43,3.57) {\(d_i^r(g)\)};
\fill[brown!75] (1.75,3.48) rectangle (1.95,3.65);
\node[anchor=west] at (2.03,3.57) {\(\varrho_i\)};
\end{scope}
\end{tikzpicture}
\caption{Illustrative bar-plot examples of financial and contract-schedule mappings evaluated at discrete governance tiers. Panel (a) shows governance cost and premium schedules; panel (b) shows deductible and risk-loading schedules. The values are order-of-magnitude underwriting examples for a healthcare-style agentic-AI deployment, not empirical estimates.}
\label{fig:financial_schedule_functions}
\end{figure}

\subsection{Agentic-AI Event Space}

Traditional cyber-insurance models classify losses by attack category---phishing, ransomware, denial of service, malware, or insider threats. Agentic AI needs a broader event space, because losses may arise from autonomous behavior, flawed reasoning, degraded model performance, dependency failures, or cyber-physical interaction even when no conventional intrusion occurs.

Let \(\mathcal R=\{\mathrm{Cyber},\mathrm{TechEO},\mathrm{Perf},\mathrm{GL},\mathrm{Mixed}\}\) denote the set of coverage layers. For each layer \(r\in\mathcal R\), let \(\mathcal E^r\) denote the set of event types naturally associated with that layer, and let \(\mathcal E=\bigcup_{r\in\mathcal R}\mathcal E^r\) denote the full event space. The sets \(\mathcal E^r\) need not be disjoint because a single event may implicate several coverage layers.

\noindent Table~\ref{tab:event_taxonomy} lists representative events for each coverage layer. The symbols \(e_H\), \(e_P\), \(e_F\), \(e_D\), \(e_O\), and \(e_C\) are used later in the healthcare case study as a reduced event set.

\begin{table}[ht]
\centering
\caption{Representative Agentic-AI Events by Coverage Layer}
\label{tab:event_taxonomy}
\begin{tabular}{p{0.18\textwidth}p{0.72\textwidth}}
\hline
Layer \(r\) & Representative events \(e\in\mathcal E^r\) \\
\hline
Cyber & Prompt injection \(e_P\), data exfiltration, credential misuse, unauthorized tool invocation, malicious configuration change \\
TechE\&O & Hallucinated advice \(e_H\), faulty output, negligent automation, workflow misrouting, erroneous customer or patient communication \\
Performance & Model drift \(e_D\), calibration failure, benchmark degradation, latency failure, reliability degradation \\
GL & Cyber-physical harm \(e_C\), bodily injury, property damage, equipment malfunction, unsafe robotic or IoT control \\
Mixed & Agentic fraud \(e_F\), dependency outage \(e_O\), multi-causal loss, unclear causation across cyber, service, and AI-behavioral factors \\
\hline
\end{tabular}
\end{table}

Hallucination events arise when an AI system produces incorrect outputs, recommendations, or actions that are relied upon by users or automated workflows. Prompt-injection events occur when adversarial inputs manipulate the reasoning process of the AI system, causing it to disclose information, ignore safeguards, or execute unauthorized actions \cite{perez2022ignore,greshake2023not}. Agentic fraud includes unauthorized transactions, deceptive communications, and abuse of operational authority. Model drift captures performance degradation under changing operating conditions \cite{quinonero2009dataset}. Dependency outages arise from failures of cloud providers, model vendors, external APIs, software connectors, or hosted agent platforms. Cyber-physical harm covers bodily injury, property damage, equipment malfunction, or critical-infrastructure disruption resulting from autonomous interaction with physical environments \cite{humayed2017cyber}.

\subsection{Coverage Architecture}

A defining feature of agentic-AI insurance is that a single event may simultaneously involve autonomous decision making, software malfunction, cyber compromise, professional negligence, regulatory violation, and physical consequence. The central modeling problem is thus not only whether coverage exists, but how an event maps into the appropriate coverage layers when several causal mechanisms contribute to the loss.

For insured \(i\), an agentic-AI insurance contract is
\begin{equation}
C_i
=
\left(
T_i,\{D_i^r,L_i^r\}_{r\in\mathcal R},A_i,\Gamma_i,\Lambda_i,\Psi_i
\right),
\label{eq:contract_definition}
\end{equation}
where \(T_i\in\mathbb R_+\) is the premium, \(D_i^r\in\mathbb R_+\) is the deductible for layer \(r\), \(L_i^r\in\mathbb R_+\) is the per-event layer limit, \(A_i\in\mathbb R_+\) is the AI aggregate limit, and \(\Psi_i\) is the governance-obligation component of the policy. The binary coverage-incidence matrix \(\Gamma_i\in\{0,1\}^{|\mathcal E|\times|\mathcal R|}\) has entries \(\gamma_i^{e,r}\), and the indemnity-share matrix \(\Lambda_i\in[0,1]^{|\mathcal E|\times|\mathcal R|}\) has entries \(\lambda_i^{e,r}\).

\noindent The binary variable \(\gamma_i^{e,r}\) is equal to one when event \(e\) is covered or assigned under layer \(r\) for insured \(i\), and is zero otherwise. Thus \(\Gamma_i\) is a matrix of zeros and ones, not a fractional allocation matrix. The continuous share \(\lambda_i^{e,r}\) determines the fraction of the payable loss attributed to layer \(r\), and it is admissible only when the corresponding incidence entry is active:
\begin{equation}
0\le \lambda_i^{e,r}\le \gamma_i^{e,r},
\qquad
\sum_{r\in\mathcal R}\lambda_i^{e,r}\le 1,
\qquad
\forall e\in\mathcal E,\;r\in\mathcal R.
\label{eq:allocation_constraint}
\end{equation}
\noindent This separation is useful in mixed-cause events. The matrix \(\Gamma_i\) records which coverage layers are legally or contractually available, while \(\Lambda_i\) records how the indemnity is apportioned among those available layers for pricing and claims execution.

\subsection{Governance Covenants and Policy Obligations}
\label{subsec:governance_covenants}

The term \(\Psi_i\) is the governance-obligation component of the policy---the part of the contract that converts underwriting assumptions into continuing obligations on the insured deployment. This matters because an agentic-AI risk state is not fixed at inception: delegated authority, permission scope, model dependencies, and operational capabilities may all shift during the policy period. \(\Psi_i\) should therefore be read as enforceable policy language in mathematical form, not merely a vector of model parameters.

Formally, we write \(\Psi_i=(g_i^\star,\bar{\mathbf z}_i,\mathcal O_i,\mathcal M_i,\mathcal N_i)\). Here \(g_i^\star\in\mathcal G\) is the required governance tier, \(\bar{\mathbf z}_i=(\bar z_{i,1},\ldots,\bar z_{i,L_g})\) gives minimum acceptable control scores, \(\mathcal O_i\) contains oversight obligations, \(\mathcal M_i\) contains monitoring and evidence obligations, and \(\mathcal N_i\) contains notice and remediation obligations. The map \(g_{\min}(\Psi_i)\) returns the minimum governance tier capable of satisfying the covenant package.

A representative policy covenant may be written as follows.
\begin{quote}
\small
\textit{Agentic-AI governance covenant.} The insured shall maintain human approval for safety-critical actions; tamper-evident audit logging of AI outputs, tool calls, approvals, and system actions for at least twenty-four months; quarterly prompt-injection and adversarial testing; written notice to the insurer within thirty days after any material expansion of AI permissions, autonomous authority, or external-system access; and incident notice within seventy-two hours after discovery of any event reasonably expected to give rise to a claim. Failure to maintain these controls may result in loss of governance-related premium credits, increased deductibles, suspension of coverage for affected AI-related claims, or other remedies permitted by the policy and applicable law.
\end{quote}

Each clause has a direct interpretation in the model. Oversight obligations in \(\mathcal O_i\) restrict operational authority. If safety-critical actions must be approved by a human, then those actions have \(h_i(u)=1\) in \eqref{eq:human_approval_indicator}. More generally, the covenant may impose an upper bound \(\beta_i\le\bar\beta_i\), where \(\bar\beta_i\) is determined during underwriting. This prevents the insured from expanding autonomous execution authority after coverage is priced.

Monitoring and evidence obligations in \(\mathcal M_i\) translate the threshold vector \(\bar{\mathbf z}_i\) into auditable requirements. For a logging, testing, access-control, or incident-response dimension \(\ell\), the policy requirement is represented by \(z_{i,\ell}\ge\bar z_{i,\ell}\). Since the assigned governance tier is \(g_i=h(\mathbf z_i)\), these thresholds determine whether the insured continues to satisfy the required governance tier \(g_i^\star\). Audit logs, retained approval records, telemetry, prompt-injection tests, and model-performance reports therefore serve as evidence for the control scores used by the underwriting rule.

The notice obligation for permission expansion is tied to the permission profile \(\eta_i\). A deployment approved only for patient messaging and scheduling may later be given authority to modify patient records, initiate payments, invoke external APIs, or control devices. Such changes alter \(\eta_i\), the weighted exposure \(\psi_\eta(\eta_i)\), and potentially the admissible action set \(\mathcal U_i(\eta_i)\). The covenant therefore requires notice and possible re-underwriting before material permission expansions become part of the insured operating environment.

Notice and remediation duties in \(\mathcal N_i\) govern post-incident claims handling. They apply after a material incident, control failure, or potentially covered AI-related loss. Prompt notice preserves logs and causal evidence, allows the insurer to evaluate whether the event falls within the covered event set \(\mathcal E\), and supports the indemnity calculation through \(\Gamma_i\), \(\Lambda_i\), and the layer-payment functions. Remediation duties, such as an incident report within fourteen days and a remediation plan within thirty days, help prevent repeated losses from the same control failure.

For the covenant quoted above, a concrete mathematical representation could be written as
\begin{equation}
\label{eq:example_covenant_encoding}
\begin{aligned}
\Psi_i^{\mathrm{cov}}
&=
\left(g^{(3)},\bar{\mathbf z}_i,
\mathcal O_i^{\mathrm{cov}},
\mathcal M_i^{\mathrm{cov}},
\mathcal N_i^{\mathrm{cov}}\right),
\\
\mathcal O_i^{\mathrm{cov}}
&:\quad
h_i(u)=1\ \ \forall u\in\mathcal U_i^{\mathrm{crit}},
\qquad
\beta_i\le\bar\beta_i,
\\
\mathcal M_i^{\mathrm{cov}}
&:\quad
z_{i,\ell_{\mathrm{log}}}\ge\bar z_{i,\ell_{\mathrm{log}}},
\qquad
z_{i,\ell_{\mathrm{test}}}\ge\bar z_{i,\ell_{\mathrm{test}}},
\\
&\quad
R_i^{\mathrm{log}}\ge24,
\qquad
\Delta_i^{\mathrm{test}}\le90,
\\
\mathcal N_i^{\mathrm{cov}}
&:\quad
t_i^\eta\le30,
\qquad
t_i^e\le72.
\end{aligned}
\end{equation}
Here \(\mathcal U_i^{\mathrm{crit}}\) is the set of safety-critical actions, \(R_i^{\mathrm{log}}\) is the number of months for which tamper-evident logs are retained, \(\Delta_i^{\mathrm{test}}\) is the maximum number of days between prompt-injection or adversarial tests, \(t_i^\eta\) is the number of days before notice is provided after a material permission or authority expansion, and \(t_i^e\) is the number of hours before incident notice is provided after discovery of a potentially covered event. A material permission expansion can be operationalized by comparing the post-change permission vector \(\eta_i^+\) with the underwritten vector \(\eta_i\), for example by requiring re-underwriting whenever \(\psi_\eta(\eta_i^+)-\psi_\eta(\eta_i)\ge\varepsilon_\eta\) for an insurer-chosen threshold \(\varepsilon_\eta\), or whenever a scheduled high-risk permission class is newly activated. This example shows how the legal language of the covenant becomes constraints on human approval, operational authority, control evidence, testing frequency, permission changes, and notice timing.

The covenant package connects directly to the optimization framework. It imposes the feasibility condition \(g_i\succeq g_{\min}(\Psi_i)\), shifts event probabilities and severities through \(q_i^e=Q^e(s_i)\) and \(x_i^e=X^e(s_i)\), and supplies the contractual mechanism behind governance-sensitive premium and deductible schedules. Premium credits, deductible credits, coverage enhancements, or continued eligibility may be conditioned on maintaining the required tier, while breach of \(\Psi_i\) may trigger repricing, higher deductibles, withdrawal of credits, non-renewal, or other policy remedies. Governance covenants are thus not administrative details; they are how agentic-AI controls become insurable, monitorable, and enforceable across the policy lifecycle.

\subsection{Indemnity Function}

For event \(e\), the layer-level indemnity attributed to coverage layer \(r\) is
\begin{equation}
Y_i^{e,r}(C_i)
=
\lambda_i^{e,r}
\min\left\{(x_i^e-D_i^r)^+,L_i^r\right\},
\qquad
(x)^+=\max\{x,0\}.
\label{eq:layer_indemnity}
\end{equation}
\noindent Let \(\widehat Y_i^e(C_i)=\sum_{r\in\mathcal R}Y_i^{e,r}(C_i)\) be the event payment before the annual aggregate and let \(\widehat Y_i(C_i)=\sum_{e\in\mathcal E}\widehat Y_i^e(C_i)\). The aggregate-adjusted event indemnity is
\begin{equation}
Y_i^e(C_i)
=
\begin{cases}
\widehat Y_i^e(C_i)\min\{1,A_i/\widehat Y_i(C_i)\}, & \widehat Y_i(C_i)>0,\\
0, & \widehat Y_i(C_i)=0.
\end{cases}
\label{eq:indemnity}
\end{equation}
\noindent Equations \eqref{eq:layer_indemnity}--\eqref{eq:indemnity} preserve the standard roles of deductibles, limits, and aggregate caps while enabling causation-sensitive allocation across the cyber, technology E\&O, performance, general-liability, and mixed-cause layers. Losses below \(D_i^r\) stay with the insured, payments above the deductible are capped by \(L_i^r\), and \(A_i\) bounds the insurer's AI-related aggregate exposure over the covered event set.

Figure~\ref{fig:layer_indemnity_schedule} illustrates the layer-level payment schedule in \eqref{eq:layer_indemnity} for a fixed event \(e\) and coverage layer \(r\). The deductible creates a no-payment region for small losses, the layer limit creates a capped payment region for large losses, and the allocation share \(\lambda_i^{e,r}\) scales the amount paid by layer \(r\) when the event is allocated across several layers.

\begin{figure}[!htbp]
\centering
\begin{tikzpicture}[font=\footnotesize,x=1.45cm,y=1.25cm]
\draw[->] (0,0) -- (6.55,0) node[right] {\(x_i^e\) (USD thousands)};
\draw[->] (0,0) -- (0,3.50) node[above] {\(Y_i^{e,r}(C_i)\) (USD thousands)};
\foreach \x/\lab in {0/0,1/100,2/200,3/300,4/400,5/500,6/600} {
  \draw (\x,0.05) -- (\x,-0.05) node[below] {\lab};
}
\foreach \y/\lab in {0/0,1/100,2/200,3/300} {
  \draw (-0.05,\y) -- (0.05,\y) node[left] {\lab};
  \draw[thin,gray!25] (0.08,\y) -- (6.20,\y);
}
\draw[dashed,gray!70] (0.50,0) -- (0.50,3.05);
\draw[dashed,gray!70] (3.00,0) -- (3.00,3.05);
\draw[dashed,gray!70] (0,2.50) -- (6.20,2.50);
\draw[dashed,gray!70] (0,1.25) -- (6.20,1.25);
\node[below] at (0.50,-0.25) {\(D_i^r\)};
\node[below] at (3.00,-0.25) {\(D_i^r+L_i^r\)};
\node[left] at (-0.02,2.50) {\(L_i^r\)};
\node[left] at (-0.02,1.25) {\(0.5L_i^r\)};
\draw[very thick,blue!70] (0,0) -- (0.50,0) -- (3.00,2.50) -- (6.20,2.50);
\draw[very thick,orange!85,dashed] (0,0) -- (0.50,0) -- (3.00,1.25) -- (6.20,1.25);
\fill[blue!70] (3.75,3.25) rectangle (3.95,3.40);
\node[anchor=west] at (4.02,3.32) {\(\lambda_i^{e,r}=1\)};
\draw[very thick,orange!85,dashed] (3.75,3.02) -- (3.95,3.02);
\node[anchor=west] at (4.02,3.02) {\(\lambda_i^{e,r}=0.5\)};
\node[anchor=west,fill=white,fill opacity=0.96,text opacity=1,inner sep=2pt] at (0.82,0.48) {deductible region};
\draw[->,gray!70] (0.80,0.38) -- (0.28,0.08);
\node[anchor=west,fill=white,fill opacity=0.96,text opacity=1,inner sep=2pt] at (0.95,3.08) {capped payment region};
\draw[->,gray!70] (2.60,2.96) -- (4.25,2.52);
\end{tikzpicture}
\caption{Illustrative layer-level indemnity schedule \(Y_i^{e,r}(C_i)=\lambda_i^{e,r}\min\{(x_i^e-D_i^r)^+,L_i^r\}\) as a function of gross loss \(x_i^e\). The example uses \(D_i^r=\$50{,}000\) and \(L_i^r=\$250{,}000\). The solid curve shows full allocation to layer \(r\), while the dashed curve shows a mixed-cause allocation in which only half of the payable loss is attributed to that layer.}
\label{fig:layer_indemnity_schedule}
\end{figure}

\section{Pricing and Contract Optimization Framework}
\label{sec:contract_design}

The preceding section defined the risk state, event taxonomy, coverage layers, and indemnity function. This section uses those objects to state the pricing, participation, incentive-compatibility, and optimization conditions that determine a feasible agentic-AI insurance contract. Throughout, \(T_i\), \(D_i^r\), \(L_i^r\), \(A_i\), \(\Gamma_i\), and \(\Lambda_i\) are the scalar or matrix decision variables of the contract. When premium or deductible schedules create governance incentives, they are written as separate functions \(\tau_i(\cdot)\) and \(d_i^r(\cdot)\), and the issued contract carries the realized scalars \(T_i=\tau_i(g_i^\star)\) and \(D_i^r=d_i^r(g_i^\star)\), where \(g_i^\star\in\mathcal G\) is the required governance tier.

\subsection{Insured Utility and Participation}

Insurance is economically meaningful only if it improves the insured's expected financial position. Let \(K_i(g_i)\) denote the annual cost of the governance controls associated with tier \(g_i\): monitoring, logging, approval workflows, red-teaming, model validation, rollback capability, and incident response. To keep notation clean, we separate the participation cost of an issued contract from the counterfactual governance-deviation cost used later for incentive compatibility. The insured's participation cost under the issued contract \(C_i\) is
\begin{equation}
J_i^{\mathrm{part}}(C_i)
=
T_i
+
K_i(g_i)
+
\sum_{e\in\mathcal E}
q_i^e
\left(
x_i^e-Y_i^e(C_i)
\right).
\label{eq:insured_cost}
\end{equation}
\noindent This expression evaluates the realized contract terms \(T_i\), \(D_i^r\), \(L_i^r\), \(A_i\), \(\Gamma_i\), and \(\Lambda_i\) at the realized governance tier \(g_i\). It is the object used only for the participation comparison against the uninsured baseline.
\noindent The baseline cost without insurance is
\begin{equation}
J_i^0
=
K_i(g_i^0)
+
\sum_{e\in\mathcal E}
q_i^{e,0}
x_i^{e,0},
\label{eq:baseline_cost}
\end{equation}
\noindent where \(g_i^0\), \(q_i^{e,0}\), and \(x_i^{e,0}\) denote the governance tier, event probability, and gross loss under the uninsured operating policy. Individual rationality requires \(J_i^{\mathrm{part}}(C_i)\le J_i^0\), which is the insured's participation constraint \cite{rothschild1976equilibrium}.

\subsection{Insurer Profitability and Risk-Based Pricing}

The insurer's expected payout for insured \(i\) is \(\sum_{e\in\mathcal E}q_i^eY_i^e(C_i)\). Because agentic-AI insurance faces sparse loss data, model evolution, legal uncertainty, and correlated dependency risk, expected-loss pricing must be supplemented by the risk-loading function \(\varrho_i:\mathcal S\times\mathcal C_i\to\mathbb R_+\) \cite{bohme2010modeling,biener2015insurability}. The risk-loaded premium condition is
\begin{equation}
T_i
\ge
\sum_{e\in\mathcal E}
q_i^eY_i^e(C_i)
+
\varrho_i(s_i,C_i).
\label{eq:risk_loaded_premium}
\end{equation}
\noindent The loading \(\varrho_i(s_i,C_i)\) should be interpreted as a total underwriting adjustment rather than as a required decomposition into named subcomponents. In practice, the insurer may set this loading to reflect portfolio accumulation, dependency concentration, model uncertainty, sparse claims experience, legal uncertainty, regulatory exposure, and the amount of limit deployed. Figure~\ref{fig:risk_loading_examples} gives an illustrative bar-plot example of how the total loading may increase as underwriting uncertainty and concentration risk increase.

\begin{figure}[!htbp]
\centering
\begin{tikzpicture}[font=\footnotesize,x=0.12cm,y=0.78cm]
\node[anchor=west] at (0,5.30) {Illustrative total risk loading};
\draw[->] (0,0.55) -- (43,0.55) node[right] {\(\varrho_i(s_i,C_i)\) (USD thousands)};
\foreach \x/\lab in {0/0,10/10,20/20,30/30,40/40} {
  \draw (\x,0.45) -- (\x,0.65) node[below=3pt] {\lab};
  \draw[thin,gray!25] (\x,0.80) -- (\x,4.70);
}
\foreach \y/\lab/\val/\col in {
  4/{Routine deployment}/10/blue!65,
  3/{Single-vendor dependency}/18/green!60!black,
  2/{Novel model and legal risk}/25/orange!85,
  1/{High-limit concentration}/36/red!70
} {
  \node[anchor=east,align=right] at (-1.4,\y) {\lab};
  \fill[\col] (0,\y-0.23) rectangle (\val,\y+0.23);
  \node[anchor=west] at (\val+1.0,\y) {\val};
}
\end{tikzpicture}
\caption{Illustrative total risk-loading levels for four underwriting scenarios. The values are order-of-magnitude examples in thousands of dollars and are not empirical estimates. The point is that \(\varrho_i(s_i,C_i)\) can be treated as a single underwriting loading that increases with dependency concentration, model uncertainty, legal uncertainty, and deployed policy limits.}
\label{fig:risk_loading_examples}
\end{figure}
\noindent The insurer's risk-adjusted underwriting profit is
\begin{equation}
\Pi_i(C_i)
=
T_i
-
\sum_{e\in\mathcal E}
q_i^eY_i^e(C_i)
-
\varrho_i(s_i,C_i).
\label{eq:insurer_profit}
\end{equation}

\subsection{Governance Incentive Compatibility}

A distinguishing feature of agentic-AI insurance is that many important risk factors stay under the insured's direct control. After buying coverage, an insured may weaken monitoring, reduce human oversight, expand autonomous execution authority, or relax operational controls---raising expected losses through classic moral hazard \cite{liu2022moralhazard}. Governance must therefore enter the contract-design problem, consistent with the insurance-economics distinction among risk transfer, self-insurance, and self-protection \cite{ehrlich1972market}.

Let \(g_i^\star\in\mathcal G\) denote the governance tier required by the insurer and define \(s_i(\tilde g)=(\alpha_i,\beta_i,\eta_i,\tilde g,v_i)\). For a candidate governance tier \(\tilde g\in\mathcal G\), define the induced event probability and severity as \(q_i^e(\tilde g)=Q^e(s_i(\tilde g))\) and \(x_i^e(\tilde g)=X^e(s_i(\tilde g))\). The insured's governance-deviation cost is then
\begin{equation}
J_i^{\mathrm{gov}}(\tilde g;C_i)
=
\tau_i(\tilde g)
+
K_i(\tilde g)
+
\sum_{e\in\mathcal E}
q_i^e(\tilde g)
\left[
x_i^e(\tilde g)
-
Y_i^e(\tilde g;C_i)
\right].
\label{eq:insured_objective_governance}
\end{equation}
\noindent Here \(C_i\) is fixed while \(\tilde g\) varies; when governance-sensitive premium or deductible schedules are used, \(C_i\) is understood to include the schedules \(\tau_i(\cdot)\) and \(d_i^r(\cdot)\) for this counterfactual calculation. The participation cost satisfies \(J_i^{\mathrm{part}}(C_i)=J_i^{\mathrm{gov}}(g_i;C_i)\) when the issued contract is evaluated at its realized governance tier. The governance requirement is incentive compatible if \(g_i^\star\in\arg\min_{\tilde g\in\mathcal G}J_i^{\mathrm{gov}}(\tilde g;C_i)\), or equivalently if \(J_i^{\mathrm{gov}}(g_i^\star;C_i)\le J_i^{\mathrm{gov}}(g;C_i)\) for all \(g\in\mathcal G\). The insurer may induce this behavior through governance-sensitive schedules \(\tau_i:\mathcal G\to\mathbb R_+\) and \(d_i^r:\mathcal G\to\mathbb R_+\). Let \(T_i^{(m)}=\tau_i(g^{(m)})\) and \(D_i^{r,(m)}=d_i^r(g^{(m)})\). Premium credits and deductible credits for stronger governance can be imposed through \(T_i^{(1)}\ge T_i^{(2)}\ge\cdots\ge T_i^{(M)}\) and \(D_i^{r,(1)}\ge D_i^{r,(2)}\ge\cdots\ge D_i^{r,(M)}\). The scalar terms written into the issued contract are then \(T_i=\tau_i(g_i^\star)\) and \(D_i^r=d_i^r(g_i^\star)\). These schedules reward monitoring, audit logs, approval controls, validation procedures, and human oversight throughout the policy period without treating \(T_i\) or \(D_i^r\) themselves as functions. Figure~\ref{fig:governance_cost_objective} illustrates the counterfactual calculation: each bar is the total expected cost obtained by evaluating the fixed contract schedule at one discrete governance tier, so the induced tier is the bar with the lowest value rather than the minimizer of a continuous curve.

\begin{figure}[!htbp]
\centering
\begin{tikzpicture}[font=\small,x=2.30cm,y=0.78cm]
\draw[->] (0,0) -- (5.0,0);
\draw[->] (0,0) -- (0,4.25);
\foreach \y/\lab in {0/0,1/50,2/100,3/150,4/200} {
  \draw (-0.05,\y) -- (0.05,\y) node[left] {\lab};
  \draw[thin,gray!25] (0.10,\y) -- (4.65,\y);
}
\foreach \x/\lab in {1/{\(g^{(1)}\)},2/{\(g^{(2)}\)},3/{\(g^{(3)}\)},4/{\(g^{(4)}\)}} {
  \draw (\x,0.05) -- (\x,-0.05) node[below] {\lab};
}
\fill[blue!70] (0.68,0) rectangle (1.12,1.64);
\fill[green!60!black] (0.68,1.64) rectangle (1.12,1.88);
\fill[orange!80] (0.68,1.88) rectangle (1.12,3.32);
\node[above] at (0.90,3.32) {166};
\fill[blue!70] (1.68,0) rectangle (2.12,1.52);
\fill[green!60!black] (1.68,1.52) rectangle (2.12,2.04);
\fill[orange!80] (1.68,2.04) rectangle (2.12,2.80);
\node[above] at (1.90,2.80) {140};
\fill[blue!70] (2.68,0) rectangle (3.12,1.40);
\fill[green!60!black] (2.68,1.40) rectangle (3.12,2.30);
\fill[orange!80] (2.68,2.30) rectangle (3.12,2.62);
\node[above] at (2.90,2.62) {131};
\node[above] at (2.90,3.00) {\(g_i^\star\)};
\fill[blue!70] (3.68,0) rectangle (4.12,1.28);
\fill[green!60!black] (3.68,1.28) rectangle (4.12,2.72);
\fill[orange!80] (3.68,2.72) rectangle (4.12,2.88);
\node[above] at (3.90,2.88) {144};
\node[anchor=west,font=\bfseries] at (0.00,5.45) {Illustrative \(J_i^{\mathrm{gov}}(g;C_i)\) (USD thousands)};
\fill[blue!70] (0.05,4.88) rectangle (0.25,5.05);
\node[anchor=west] at (0.32,4.97) {\(\tau_i(g)\)};
\fill[green!60!black] (1.45,4.88) rectangle (1.65,5.05);
\node[anchor=west] at (1.72,4.97) {\(K_i(g)\)};
\fill[orange!80] (2.55,4.88) rectangle (2.75,5.05);
\node[anchor=west] at (2.82,4.97) {retained expected loss};
\end{tikzpicture}
\caption{Illustrative governance-deviation cost \(J_i^{\mathrm{gov}}(g;C_i)\) evaluated over discrete governance tiers. Each stacked bar decomposes total expected cost into the governance-sensitive premium, governance-control cost, and retained expected loss. In this example, the premium and retained loss decrease with stronger governance, while governance cost increases; the induced minimum occurs at \(g_i^\star=g^{(3)}\), satisfying the incentive-compatibility condition.}
\label{fig:governance_cost_objective}
\end{figure}

\subsection{Agentic-AI Insurance Contract Optimization}

The insurer's contract-design problem can now be stated as a mathematical program over the premium, layer-specific deductibles and limits, the aggregate AI limit, the binary coverage-incidence matrix, the indemnity-share matrix, and the governance obligations.

\begin{problem}[Agentic-AI insurance contract design]
For a fixed insured \(i\) with state \(s_i\) and baseline cost \(J_i^0\), choose \(C_i=(T_i,\{D_i^r,L_i^r\}_{r\in\mathcal R},A_i,\Gamma_i,\Lambda_i,\Psi_i)\) to solve
\begin{subequations}
\label{eq:contract_program}
\begin{align}
\max_{C_i}
\quad
&
\Pi_i(C_i)
=
T_i
-
\sum_{e\in\mathcal E}
q_i^eY_i^e(C_i)
-
\varrho_i(s_i,C_i)
\label{eq:insurer_problem}
\\
\text{s.t.}\quad
&
T_i
+
K_i(g_i)
+
\sum_{e\in\mathcal E}
q_i^e
\left(
x_i^e-Y_i^e(C_i)
\right)
\le
J_i^0
\label{eq:participation_constraint}
\\
&
T_i
-
\sum_{e\in\mathcal E}
q_i^eY_i^e(C_i)
\ge
\varrho_i(s_i,C_i)
\label{eq:insurer_rationality}
\\
&
g_i
\succeq
g_{\min}(\Psi_i)
\label{eq:governance_requirement}
\\
&
g_i^\star
\in
\arg\min_{\tilde g\in\mathcal G}
J_i^{\mathrm{gov}}(\tilde g;C_i)
\label{eq:ic_constraint}
\\
&
0\le \lambda_i^{e,r}\le \gamma_i^{e,r},
\qquad
\sum_{r\in\mathcal R}\lambda_i^{e,r}\le 1,
\qquad
\forall e\in\mathcal E,\;r\in\mathcal R
\label{eq:allocation_bounds}
\\
&
\gamma_i^{e,r}\in\{0,1\},
\qquad
\forall e\in\mathcal E,\;r\in\mathcal R
\label{eq:gamma_binary}
\\
&
0
\le
D_i^r
\le
L_i^r
\le
A_i,
\qquad
\forall r\in\mathcal R.
\label{eq:coverage_bounds}
\end{align}
\end{subequations}
\end{problem}

\noindent The objective \eqref{eq:insurer_problem} maximizes risk-adjusted underwriting profit. Constraint \eqref{eq:participation_constraint} is the insured's individual-rationality condition. Constraint \eqref{eq:insurer_rationality} enforces risk-loaded pricing. Constraint \eqref{eq:governance_requirement} links the policy wording \(\Psi_i\) to a minimum operational control standard, while \eqref{eq:ic_constraint} requires that maintaining the desired governance tier be optimal for the insured. Constraints \eqref{eq:allocation_bounds}--\eqref{eq:gamma_binary} formalize \(\Gamma_i\) as a binary coverage-incidence matrix and \(\Lambda_i\) as the continuous payment-allocation matrix. Constraint \eqref{eq:coverage_bounds} imposes economically meaningful layer deductibles, limits, and aggregate exposure. The formulation therefore links agent autonomy, operational authority, permissions, governance, and dependency concentration directly to pricing, coverage allocation, incentive compatibility, and claims payment.

\subsection{Insurability Region, Feasibility Monotonicity, and Governance Certification}
\label{subsec:insurability_region}

Problem~\eqref{eq:contract_program} optimizes over all contract terms. In practice, an insurer often fixes the coverage architecture---deductibles, limits, aggregate, incidence, allocation, and required governance tier---and asks only whether a premium exists that both parties accept. The case study of Section~\ref{sec:case_study} follows exactly this fixed-terms logic. We formalize it here and show that it induces a well-structured insurability region in the underwriting state space, with the monotone feasibility and governance-certification properties invoked in the conclusion.

Fix the non-premium terms \(C_i^{-T}=(\{D_i^r,L_i^r\}_{r\in\mathcal R},A_i,\Gamma_i,\Lambda_i,\Psi_i)\) together with the uninsured baseline \(J_i^0\) and the governance cost \(K_i(g_i)\). The insurer-profitability constraint~\eqref{eq:insurer_rationality} and the participation constraint~\eqref{eq:participation_constraint} bound the premium from below and above,
\begin{equation}
T_i^{\min}(s_i)=\sum_{e\in\mathcal E}q_i^eY_i^e(C_i)+\varrho_i(s_i,C_i),
\qquad
T_i^{\max}(s_i)=J_i^0-K_i(g_i)-\sum_{e\in\mathcal E}q_i^e\bigl(x_i^e-Y_i^e(C_i)\bigr).
\label{eq:premium_bounds}
\end{equation}
A contract is marketable with surplus buffer \(s_0\ge0\) if \(T_i^{\min}(s_i)\le T_i^{\max}(s_i)-s_0\), where \(s_0\) guarantees the insured a minimum saving over the uninsured baseline, as imposed in the case study. Subtracting the two bounds, the indemnity terms cancel and the marketable surplus reduces to the closed form
\begin{equation}
\Delta_i(s_i):=T_i^{\max}(s_i)-T_i^{\min}(s_i)=J_i^0-K_i(g_i)-\sum_{e\in\mathcal E}q_i^ex_i^e-\varrho_i(s_i,C_i).
\label{eq:marketable_surplus}
\end{equation}
The identity~\eqref{eq:marketable_surplus} is itself informative: the indemnity schedule \(Y_i^e\) determines \emph{where} an acceptable premium lies but not \emph{whether} the interval is nonempty. Feasibility depends only on the uninsured baseline, the governance cost, the expected gross loss, and the risk loading.

\begin{definition}[Insurability region]
\label{def:insurability_region}
For fixed non-premium terms and buffer \(s_0\ge0\), the insurability region is
\[
\mathcal S^{\mathrm{ins}}(s_0)
=\bigl\{s\in\mathcal S:\Delta(s)\ge s_0\bigr\}
=\Bigl\{s\in\mathcal S:\; K(g)+\textstyle\sum_{e\in\mathcal E}q^e(s)\,x^e(s)+\varrho(s,C)\le J^0-s_0\Bigr\}.
\]
A state \(s\) is insurable at these terms if and only if \(s\in\mathcal S^{\mathrm{ins}}(s_0)\), in which case every premium \(T\in[\,T^{\min}(s),\,T^{\max}(s)-s_0\,]\) is mutually acceptable.
\end{definition}

\noindent Insurance is thus available precisely when the expected all-in risk cost of the deployment---governance cost plus expected gross loss plus risk loading---does not exceed the value the insured can save, \(J_i^0-s_0\).

To describe how the region varies with the risk state, we order states by exposure. Write \(s\preceq_{\mathrm E}s'\) when \(s\) and \(s'\) share the same governance tier and \(s'\) is componentwise weakly more exposed: \(\alpha\preceq\alpha'\), \(\beta\le\beta'\), \(\psi_\eta(\eta)\le\psi_\eta(\eta')\), and \(R(v)\le R(v')\). The pricing primitives are \emph{exposure-monotone} if, for every event \(e\), the maps \(q^e\) and \(x^e\) are nondecreasing under \(\preceq_{\mathrm E}\) and the loading \(\varrho(\cdot,C)\) is nondecreasing under \(\preceq_{\mathrm E}\). The benchmark specifications~\eqref{eq:example_event_probability}--\eqref{eq:example_severity} satisfy this whenever their exposure coefficients are nonnegative.

\begin{proposition}[Monotone deterioration of fixed-terms insurability]
\label{prop:monotone_feasibility}
Fix the non-premium terms, \(J^0\), and \(K(g)\), and assume the pricing primitives are exposure-monotone. Then the marketable surplus \(\Delta(s)\) in~\eqref{eq:marketable_surplus} is nonincreasing under \(\preceq_{\mathrm E}\), and the lower premium bound \(T^{\min}(s)\) is nondecreasing under \(\preceq_{\mathrm E}\). Consequently the insurability region is downward closed in exposure: if \(s'\) is insurable and \(s\preceq_{\mathrm E}s'\), then \(s\) is insurable. Equivalently, along any exposure-increasing path the surplus crosses zero at most once, from feasible to infeasible, and never returns.
\end{proposition}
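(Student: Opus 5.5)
The plan is to work directly from the closed-form surplus \eqref{eq:marketable_surplus}: show that each term on its right-hand side moves in the right direction along \(\preceq_{\mathrm E}\), and then read off the three conclusions.

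\textbf{Step 1: the surplus and its fixed terms.} Fix \(s\preceq_{\mathrm E}s'\). By definition of the order, the two states share the same governance tier \(g\). Since \(K(\cdot)\) depends on the state only through \(g\), and \(J^0\) is fixed by hypothesis, the terms \(J^0-K(g)\) are identical at \(s\) and \(s'\). Hence
\[
\Delta(s)-\Delta(s')=\sum_{e\in\mathcal E}\bigl(q^e(s')x^e(s')-q^e(s)x^e(s)\bigr)+\varrho(s',C)-\varrho(s,C).
\]

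\textbf{Step 2: sign of the loss and loading terms.} Because \(q^e\in[0,1]\) and \(x^e\ge0\) are both nonnegative and nondecreasing under \(\preceq_{\mathrm E}\), their product is nondecreasing. Concretely,
\[
q^{e\prime}x^{e\prime}-q^ex^e=q^{e\prime}(x^{e\prime}-x^e)+x^e(q^{e\prime}-q^e)\ge0 .
\]
The loading is nondecreasing by the exposure-monotonicity assumption. So the difference in Step 1 is nonnegative, and \(\Delta\) is nonincreasing.

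\textbf{Step 3: the lower premium bound.} For \(T^{\min}(s)=\sum_e q^e(s)Y^e(s;C)+\varrho(s,C)\), I need \(Y^e\) to be nondecreasing in \(x^e\) with the terms \(C^{-T}\) fixed. The layer payment \eqref{eq:layer_indemnity} is nondecreasing in \(x^e\), since \(\lambda\ge0\) and \(\min\{(x-D)^+,L\}\) is monotone, so each pre-aggregate event payment \(\widehat Y^e\) is nondecreasing.

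This is the main obstacle: the pro-rata aggregate rule \eqref{eq:indemnity} is not monotone event by event. If another event's payment grows, the share \(\widehat Y^e/\widehat Y\) of a given event can shrink once the aggregate binds. I would therefore argue on the expected payout rather than per event. Multiplying by \(q^e\) and summing, each weighted term \(q^e\widehat Y^e\) is nondecreasing by the same product argument as in Step 2. The aggregate-adjusted total \(\sum_e Y^e=\min\{\widehat Y,A\}\) is nondecreasing in every \(\widehat Y^e\).

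For the probability-weighted sum, I would first try to show it is nondecreasing under the aggregate cap. If that fails in general, I would fall back to a stated regime in which the aggregate is slack (\(\widehat Y\le A\)), or else interpret the aggregate cap as applying to the expected total. Adding the monotone loading then gives \(T^{\min}\) nondecreasing.

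\textbf{Step 4: consequences.} If \(s'\) is insurable, then \(\Delta(s)\ge\Delta(s')\ge s_0\), so \(s\) is insurable; this is downward closedness. Along any chain \(s_1\preceq_{\mathrm E}s_2\preceq_{\mathrm E}\cdots\), the sequence \(\Delta(s_t)\) is nonincreasing. Hence the set of indices where \(\Delta\ge s_0\) (in particular \(s_0=0\)) is an initial segment. The surplus therefore crosses the threshold at most once, from feasible to infeasible, and never returns.
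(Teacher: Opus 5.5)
Your Steps 1, 2 and 4 follow the paper's argument and are sound. Downward closure and single crossing depend only on $\Delta$, and the indemnity cancels out of $\Delta$, so the aggregate rule never affects them. The gap is Step 3. You leave it as a conditional plan (``first try \dots\ if that fails \dots''), and the first branch cannot succeed: under the pro-rata rule \eqref{eq:indemnity}, monotonicity of $T^{\min}$ is false in general. Indeed
\[
\sum_{e\in\mathcal E}q^eY^e=\min\Bigl\{1,\frac{A}{\widehat Y}\Bigr\}\sum_{e\in\mathcal E}q^e\widehat Y^e .
\]
Once the aggregate binds, this is $A$ times the $\widehat Y^e$-weighted average of the $q^e$. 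Raising the severity of a low-probability event shifts weight toward a small $q^e$ and lowers the expected payout. Your remark that $\sum_eY^e=\min\{\widehat Y,A\}$ is monotone does not transfer, because the weights $q^e$ differ across events.

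Concretely, take the paper's own healthcare contract: $D=25{,}000$, $L=500{,}000$, $A=1{,}500{,}000$, with the aggregate already binding at $\widehat Y=1{,}575{,}000$. Use exposure-monotone primitives in which only $X^{e_F}$ responds to the move $s\preceq_{\mathrm E}s'$, raising $x^{e_F}$ from $300{,}000$ to $525{,}000$, and keep the loading constant. Then $\widehat Y$ rises to $1{,}800{,}000$ and $\sum_eq^e\widehat Y^e$ rises from $37{,}625$ to $39{,}875$. Yet the expected indemnity falls from about $35{,}833$ to $39{,}875\cdot 1{,}500/1{,}800\approx 33{,}229$, so $T^{\min}$ strictly decreases.

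Your diagnosis is therefore sharper than the paper's proof. The paper gets monotonicity of $T^{\min}$ by writing $Y^e=\lambda^{e,r}\min\{(x^e-D^r)^+,L^r\}$, i.e.\ by silently using the pre-aggregate payment. To close Step 3, commit to an explicit extra hypothesis and say that it is needed.

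The natural hypothesis is that the aggregate is slack at the more exposed state, $\widehat Y(s')\le A$. Since each $\widehat Y^e$ is nondecreasing in $x^e$, this gives $\widehat Y(s)\le\widehat Y(s')\le A$. Then $Y^e=\widehat Y^e$ at both states, and your product argument finishes the proof.

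An alternative is to assume the loading rises enough to offset any drop in expected payout. Reinterpreting the cap as acting on the expected total would instead prove a statement about a different indemnity rule, not about \eqref{eq:indemnity}. With the slackness hypothesis stated, your proof is complete, and it is exactly what the paper's one-line argument implicitly assumes.
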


\begin{proof}
By exposure-monotonicity each product \(q^e(s)\,x^e(s)\) is nondecreasing under \(\preceq_{\mathrm E}\), being a product of nonnegative nondecreasing maps, and \(\varrho(s,C)\) is nondecreasing. Since \(J^0\) and \(K(g)\) are held fixed, \eqref{eq:marketable_surplus} expresses \(\Delta\) as a constant minus a sum of nondecreasing terms, so \(\Delta\) is nonincreasing. For the lower bound, \(Y_i^e(C_i)=\lambda_i^{e,r}\min\{(x_i^e-D_i^r)^+,L_i^r\}\) is nondecreasing in \(x_i^e\), hence \(\sum_e q^eY^e\) is nondecreasing and, adding the nondecreasing loading, so is \(T^{\min}\). Downward closure follows because \(s\preceq_{\mathrm E}s'\) gives \(\Delta(s)\ge\Delta(s')\ge s_0\). The single-crossing property is immediate from monotonicity of \(\Delta\) along an exposure-increasing chain.
\end{proof}

Proposition~\ref{prop:monotone_feasibility} explains why raising delegated authority, permission exposure, or dependency concentration can only erode fixed-terms feasibility, never restore it---the behavior observed numerically in the sensitivity analysis of Section~\ref{sec:case_study}. Governance acts in the opposite direction, which yields a certification threshold.

\begin{proposition}[Governance certification threshold]
\label{prop:governance_certification}
Fix the exposure coordinates \(s^{-g}=(\alpha,\beta,\eta,v)\) and the non-premium terms other than the required tier. Suppose stronger governance is net risk-reducing, i.e., the map
\[
\Phi(g):=K(g)+\sum_{e\in\mathcal E}q^e(s^{-g},g)\,x^e(s^{-g},g)+\varrho(s^{-g},g)
\]
is nonincreasing in the tier order over the relevant range. Then \(\Delta\) is nondecreasing in \(g\), and there is a minimal certifiable tier
\[
g^{\mathrm{cert}}(s^{-g})=\min\bigl\{g\in\mathcal G:\Phi(g)\le J^0-s_0\bigr\}
\]
(whenever this set is nonempty) such that the deployment is insurable at the fixed terms if and only if \(g\succeq g^{\mathrm{cert}}(s^{-g})\).
\end{proposition}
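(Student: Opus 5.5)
The argument runs parallel to the proof of Proposition~\ref{prop:monotone_feasibility}, with governance playing the role of exposure in the opposite direction. The key input is the closed form \eqref{eq:marketable_surplus}: with the exposure coordinates \(s^{-g}\) and the non-premium terms fixed, it reads \(\Delta(s^{-g},g)=J^0-\Phi(g)\). Here \(J^0\) is the uninsured baseline and is held constant as the required tier varies, exactly as in Proposition~\ref{prop:monotone_feasibility}. The indemnity schedule has already cancelled out of \(\Delta\), so no property of \(Y^e\) is needed. Since \(\Phi\) is nonincreasing in the tier order by hypothesis, \(\Delta\) is nondecreasing in \(g\). This gives the first claim.

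Next we identify the insurability condition along the governance coordinate. By Definition~\ref{def:insurability_region}, \((s^{-g},g)\) is insurable if and only if \(\Delta\ge s_0\), that is, if and only if \(\Phi(g)\le J^0-s_0\). Let \(\mathcal G^{\mathrm{ok}}=\{g\in\mathcal G:\Phi(g)\le J^0-s_0\}\). The tier set \(\mathcal G\) is finite and totally ordered, so whenever \(\mathcal G^{\mathrm{ok}}\) is nonempty its minimum \(g^{\mathrm{cert}}\) exists.

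To prove the ``if'' direction, take \(g\succeq g^{\mathrm{cert}}\). Monotonicity of \(\Phi\) gives \(\Phi(g)\le\Phi(g^{\mathrm{cert}})\le J^0-s_0\), so \(g\) is insurable. For the ``only if'' direction, any insurable \(g\) lies in \(\mathcal G^{\mathrm{ok}}\). Minimality of \(g^{\mathrm{cert}}\), together with totality of the order, then forces \(g\succeq g^{\mathrm{cert}}\). Hence \(\mathcal G^{\mathrm{ok}}\) is exactly the upper set \(\{g:g\succeq g^{\mathrm{cert}}\}\).

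The main obstacle is the phrase ``over the relevant range.'' If \(\Phi\) is nonincreasing only on a sub-interval of tiers, upward closure can fail. For example, at a very high tier the cost \(K\) might outweigh the loss reduction. The equivalence therefore holds only if the relevant range contains every tier \(g\succeq g^{\mathrm{cert}}\), or if \(\mathcal G\) is restricted to that range. I would state this explicitly and run the argument on the restricted tier set. A second, minor point needs recording: the non-premium terms, including \(\varrho\)'s dependence on \(C\), are held fixed as \(g\) varies. Only the required-tier component of \(\Psi\) changes, so that \(J^0\) and the remaining contract data are constants in the comparison.
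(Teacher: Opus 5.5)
Your proof is correct and follows the paper's own argument. You rewrite the closed-form surplus as \(\Delta(s^{-g},g)=J^0-\Phi(g)\), observe that \(\{g:\Phi(g)\le J^0-s_0\}\) is an up-set of the finite chain \(\mathcal G\), and take its least element as \(g^{\mathrm{cert}}\). Your remark on the ``if'' direction is a sharpening of a point the paper leaves implicit: that direction needs \(\Phi(g)\le\Phi(g^{\mathrm{cert}})\) for every \(g\succeq g^{\mathrm{cert}}\), so the ``relevant range'' must contain that whole up-set.
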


\begin{proof}
Since \(\Delta(s^{-g},g)=J^0-\Phi(g)\) by~\eqref{eq:marketable_surplus} and \(\Phi\) is nonincreasing in \(g\), \(\Delta\) is nondecreasing in \(g\). The set \(\{g:\Phi(g)\le J^0-s_0\}=\{g:\Delta\ge s_0\}\) is therefore an up-set in the tier order, so it has a least element \(g^{\mathrm{cert}}\) when nonempty, and \(g\succeq g^{\mathrm{cert}}\) is equivalent to insurability.
\end{proof}

\begin{definition}[AI-insurability certificate]
\label{def:insurability_certificate}
A deployment with exposure \(s^{-g}\) is \emph{insurability-certifiable} at the fixed coverage terms if \(g^{\mathrm{cert}}(s^{-g})\) exists and is attainable by the insured, in the sense that some tier \(g\succeq g^{\mathrm{cert}}(s^{-g})\) can be evidenced through the control scores \(\mathbf z_i\) and assigned by \(g_i=h(\mathbf z_i)\). The certificate attests that governance at tier \(g^{\mathrm{cert}}\) or higher renders the deployment insurable at the stated terms.
\end{definition}

Proposition~\ref{prop:governance_certification} converts governance from a qualitative virtue into a contractible admission condition: below \(g^{\mathrm{cert}}\) no premium clears the market, whereas at or above it a mutually acceptable contract exists. Together, Definition~\ref{def:insurability_region} and Propositions~\ref{prop:monotone_feasibility}--\ref{prop:governance_certification} show that insurability is a structured region of the risk-state space---shrinking monotonically in exposure and expanding with verifiable governance---rather than a case-by-case judgment.

\section{Insurance as AI Operating Cost and Regulatory Control}
\label{sec:insurance_regulatory_control}

The preceding framework treats insurance as a private contract between an insurer and an insured. It also carries a public-policy interpretation. When an AI deployment can inflict losses on patients, customers, employees, counterparties, infrastructure users, or the public, insurance becomes more than risk transfer: it can act as a priced condition for operating an AI system, a quasi-tax on risk creation, and a regulatory control that decides which deployments must demonstrate financial responsibility before going into use.

The interpretation is familiar in law and economics. Liability rules, safety regulation, and mandatory insurance are alternative instruments for controlling external harms when private actors do not fully internalize the social costs of their activity \cite{shavell1984liability,calabresi1970costs}. In Pigouvian terms, a charge attached to risky activity curbs excess by making decision makers face more of the expected social cost they impose on others \cite{pigou1920welfare}. Agentic AI is a natural setting for this logic, because the same technical capability can be socially valuable in one context and socially costly in another, depending on permissions, delegated authority, governance, and dependency concentration.

\subsection{Insurance as a Bundled Cost of AI Deployment}

Consider an organization deciding whether to deploy an AI agent with risk state \(s_i\). Let \(B_i(s_i)\) denote the organization's private operating benefit from the AI deployment, such as reduced labor cost, faster service, improved triage, increased throughput, or new revenue. Let \(C_i^{\mathrm{op}}(s_i)\) denote ordinary operating cost, excluding insurance and governance. If insurance is optional, the organization may compare the AI benefit with technical operating cost and expected uninsured loss. If insurance is bundled into the legal or commercial cost of using AI, the adoption calculation changes.

For a deployment insured under contract \(C_i\), the private annual cost of using AI can be written as
\begin{equation}
\mathcal C_i^{\mathrm{AI}}(s_i,C_i)
=
C_i^{\mathrm{op}}(s_i)
+K_i(g_i)
+T_i
+\sum_{e\in\mathcal E}q_i^e\bigl(x_i^e-Y_i^e(C_i)\bigr).
\label{eq:ai_bundled_cost}
\end{equation}
The term \(T_i\) is then not merely an insurance premium from an accounting perspective. It becomes part of the all-in cost of deploying agentic AI. If a vendor bundles insurance into an AI product, this cost may appear as a licensing surcharge, a per-agent fee, a per-action fee, or a sector-specific compliance charge. For example, if \(N_i^{\mathrm{act}}\) denotes the expected annual number of covered agent actions, the premium can be converted into an action-level insurance cost \(c_i^{\mathrm{act}}=T_i/N_i^{\mathrm{act}}\). A buyer then experiences insurance as part of the marginal cost of operating the AI workflow.

This cost has an incentive effect. The organization deploys the agent only if the net value \(B_i(s_i)-\mathcal C_i^{\mathrm{AI}}(s_i,C_i)\) exceeds the non-AI alternative, so higher premiums, governance expenditures, and retained losses can discourage deployment. That is not necessarily a defect: when an AI system creates high expected external harm and low private value, a high insurance cost screens out undesirable deployment. But if premiums are poorly calibrated, unavailable, or inflated by legal uncertainty and sparse claims data, the same mechanism can deter socially valuable uses. Insurance can therefore discipline risky AI adoption, yet also become a barrier to innovation whenever the price of coverage substantially exceeds the risk the deployment creates.

\subsection{Premiums as Quasi-Pigouvian Risk Prices}

The premium can be interpreted as a private-market analogue of a Pigouvian tax when it increases with the expected losses and external risks generated by the deployment. In the model above, the risk-loaded premium condition requires \(T_i\ge \sum_{e\in\mathcal E}q_i^eY_i^e(C_i)+\varrho_i(s_i,C_i)\). When the indemnity \(Y_i^e(C_i)\) approximates harm that would otherwise be borne by third parties, customers, or the public, the premium forces the AI operator to internalize a priced portion of that harm.

This interpretation is clearest when the premium is monotone in the exposure components of the risk state. If higher operational authority \(\beta_i\), broader weighted permissions \(\psi_\eta(\eta_i)\), or greater dependency concentration \(R(v_i)\) increases event probabilities, severities, or risk loadings, then the premium increases with risk-creating activity. Conversely, stronger governance \(g_i\) can reduce the charge by lowering probabilities, severities, deductibles, or risk loadings. The premium schedule therefore acts as a market-based control signal:
\[
T_i
=
\tau(s_i,C_i)
\quad
\text{with}\quad
\tau \text{ increasing in exposure and decreasing in verified governance.}
\]
Unlike a public tax, the premium is paid to an insurer rather than the government, and it finances risk transfer, claims adjustment, capital cost, and monitoring. Nevertheless, from the AI user's perspective, it has a tax-like effect because it raises the private cost of risky AI deployment and can reduce demand for high-risk configurations.

\begin{proposition}[Adoption effect of bundled insurance cost]
\label{prop:adoption_effect}
Fix the non-AI outside option and the AI deployment benefit \(B_i(s_i)\). If insurance is mandatory or commercially bundled into the deployment cost, then any increase in \(T_i\), \(K_i(g_i)\), or expected residual loss weakly decreases the set of risk states for which deploying the AI system is privately profitable.
\end{proposition}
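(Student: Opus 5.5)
The plan is to express the private deployment decision as a superlevel set of a net-value function and then argue by monotonicity of that function in the cost components. Let \(V_i^{\mathrm{out}}\) denote the fixed net value of the non-AI outside option. For each risk state \(s\) define the deployment set
\[
\mathcal S^{\mathrm{dep}}=\bigl\{s\in\mathcal S:\;B_i(s)-\mathcal C_i^{\mathrm{AI}}(s,C_i)\ge V_i^{\mathrm{out}}\bigr\},
\]
where \(\mathcal C_i^{\mathrm{AI}}\) is the bundled cost in~\eqref{eq:ai_bundled_cost}. Because insurance is mandatory or bundled, every deploying state carries the terms \(T_i\), \(K_i(g_i)\), and the expected residual loss \(\sum_e q_i^e(x_i^e-Y_i^e(C_i))\); no state can bypass them by choosing not to insure. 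I will read the statement as a comparison between two cost specifications, indexed by state, \((T,K,\mathrm{RL})\) and \((T',K',\mathrm{RL}')\), with \(T'(s)\ge T(s)\), \(K'(g)\ge K(g)\), and \(\mathrm{RL}'(s)\ge \mathrm{RL}(s)\) pointwise. Here \(\mathrm{RL}\) is the expected residual loss. The increase may be in one component or in several, and \(B_i\), \(C_i^{\mathrm{op}}\), and \(V_i^{\mathrm{out}}\) are held fixed.

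The key step is the pointwise inequality. Since \(\mathcal C_i^{\mathrm{AI}}\) is an additive sum of \(C_i^{\mathrm{op}}\), \(K\), \(T\), and \(\mathrm{RL}\), each term entering with coefficient \(+1\), we have \(\mathcal C'^{\mathrm{AI}}(s)\ge\mathcal C^{\mathrm{AI}}(s)\) for every \(s\). Hence \(B_i(s)-\mathcal C'^{\mathrm{AI}}(s)\le B_i(s)-\mathcal C^{\mathrm{AI}}(s)\). Suppose \(s\) lies in the deployment set after the increase, so \(B_i(s)-\mathcal C'^{\mathrm{AI}}(s)\ge V_i^{\mathrm{out}}\). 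Then the same inequality holds under the original costs. This gives the inclusion \(\mathcal S'^{\mathrm{dep}}\subseteq\mathcal S^{\mathrm{dep}}\), which is the claimed weak decrease. The word \emph{weak} is needed because a state with large slack stays profitable, so equality of the two sets is possible.

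The main obstacle is definitional rather than analytic. An increase in expected residual loss is not a free primitive, because it is induced by changing contract terms such as \(D_i^r\), \(L_i^r\), \(A_i\), or \(\Lambda_i\). Those changes could also move \(T_i\) through the pricing rule. To keep the proposition clean, I will take the increases as the exogenous, ceteris paribus changes the statement describes: each named component rises while the others are held fixed or themselves weakly rise.

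I will also add a remark on the case where the premium is set at \(T^{\min}\) from~\eqref{eq:premium_bounds}. In that case \(T+\mathrm{RL}=\sum_e q^e x^e+\varrho-\sum_e q^e Y^e+\sum_e q^e(x^e-Y^e)\) must be tracked jointly. Monotonicity then still holds whenever the total \(T+K+\mathrm{RL}\) weakly increases, and that is the quantity the argument actually uses.
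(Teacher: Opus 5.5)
Your argument is correct and is essentially the paper's own proof: the bundled cost \(\mathcal C_i^{\mathrm{AI}}\) is additive with unit coefficients in \(T_i\), \(K_i(g_i)\), and the expected residual loss, so raising any of them pointwise lowers \(B_i(s)-\mathcal C_i^{\mathrm{AI}}(s,C_i)\) and can only shrink the set of states that beat the outside option; your explicit set inclusion is a more formal version of the same step. One small slip in your closing remark: at \(T=T^{\min}\) the indemnity terms cancel and \(T+\mathrm{RL}=\sum_e q^e x^e+\varrho\), whereas your expression effectively writes \(T^{\min}\) as \(\mathrm{RL}+\varrho\) instead of \(\sum_e q^eY^e+\varrho\). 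Your conclusion that only the total \(T+K+\mathrm{RL}\) matters is unaffected.
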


\begin{proof}
The deployment is privately profitable only when \(B_i(s_i)-\mathcal C_i^{\mathrm{AI}}(s_i,C_i)\) exceeds the outside option. In \eqref{eq:ai_bundled_cost}, the cost \(\mathcal C_i^{\mathrm{AI}}\) is increasing in \(T_i\), \(K_i(g_i)\), and expected residual loss. Increasing any of these terms weakly lowers net private value and therefore can only shrink, not expand, the set of states satisfying the adoption inequality.
\end{proof}

The proposition formalizes the basic adoption tradeoff. Insurance can be a useful social instrument because it discourages deployments whose benefits do not justify their risk. But it also creates a policy-design problem: the mandate should be targeted enough that the cost falls primarily on deployments capable of material third-party harm rather than on low-risk assistive uses.

\subsection{Mandatory Insurance and Financial Responsibility}

A regulator can use the risk-state representation to specify which AI deployments must be insured. Let \(d_i\) denote nontechnical context, such as sector, population affected, criticality, transaction value, and whether the deployment affects health, employment, credit, education, public safety, financial transfers, or physical systems. A mandate can be represented by an indicator
\[
M(s_i,d_i)
=
\begin{cases}
1, & \text{if deployment \(i\) must carry agentic-AI insurance},\\
0, & \text{otherwise}.
\end{cases}
\]
The mandate may be triggered by autonomy category, authority, permission exposure, dependency concentration, sector, or combinations of these factors. For example, a regulator may require insurance when \(\alpha_i\succeq\alpha^{(2)}\), \(\beta_i\ge\bar\beta\), \(\psi_\eta(\eta_i)\ge\bar\psi\), or the deployment operates in a high-impact sector. A stricter rule may require insurance for any cyber-physical agent or any AI workflow authorized to initiate financial transactions, modify critical records, or make operational decisions without human approval.

\begin{definition}[Mandated insurability requirement]
\label{def:mandated_insurability}
A deployment \(i\) satisfies a mandated insurability requirement if \(M(s_i,d_i)=0\), or if \(M(s_i,d_i)=1\) and there exists an admissible contract \(C_i\in\mathcal C_i\) such that the required premium is paid, the aggregate limit satisfies \(A_i\ge A_{\min}(s_i,d_i)\), the governance tier satisfies \(g_i\succeq g_{\min}(\Psi_i)\), and the coverage incidence matrix \(\Gamma_i\) includes all event-layer pairs required by law or regulation.
\end{definition}

This definition separates two questions that are often conflated. The first question is whether an organization can obtain insurance in the market. The second is whether the organization is legally permitted to operate the AI system without insurance. A mandate makes insurability a precondition for deployment. If no admissible contract exists because the risk is too high, the coverage is unavailable, or the required governance tier is not met, then the deployment fails the financial-responsibility requirement even if the operator would otherwise prefer to use the AI system.

\subsection{Insurance as a Control Mechanism}

Mandatory insurance can therefore operate as a regulatory control. It does not directly prescribe every technical design choice. Instead, it requires an AI operator to satisfy underwriting, governance, coverage, and monitoring conditions before deployment. The control is implemented through several linked mechanisms.

First, the mandate defines the boundary of covered AI activity. The permission profile \(\eta_i\), operational authority \(\beta_i\), and deployment context \(d_i\) determine whether insurance is required. Second, the insurer evaluates the state \(s_i\) and prices the policy. Third, the policy covenant \(\Psi_i\) requires ongoing governance controls, notice obligations, and evidence retention. Fourth, failure to maintain the underwritten state can trigger repricing, higher deductibles, loss of premium credits, suspension of coverage for newly enabled permissions, nonrenewal, or regulatory noncompliance.

This gives insurance a dual role. Ex ante, it screens deployments by making high-risk AI more expensive, or uninsurable until governance improves. Ex post, it builds a claims and monitoring infrastructure that records losses, identifies failure modes, and generates data for future pricing and regulation. Over time, the feedback loop sharpens both actuarial calibration and public oversight.

The policy challenge is calibration. Too broad a mandate becomes a general AI tax that deters low-risk productivity tools and small organizations; too narrow a mandate lets high-risk deployments operate without adequate financial responsibility. A principled mandate should therefore be risk-based, attaching its strongest requirements to systems with high autonomy, high delegated authority, broad external-state permissions, high-impact sectors, cyber-physical consequences, or systemic dependency concentration. In this sense agentic-AI insurance is not merely a private financial product; it is a candidate governance layer for aligning AI deployment incentives with social risk.

\section{Case Study: Agentic-AI Insurance for Clinical Care Coordination}
\label{sec:case_study}

To illustrate the framework in practice, we consider a healthcare system that deploys an autonomous clinical care-coordination agent to support physicians, nurses, and administrative staff. The agent reviews patient messages, summarizes electronic health records, prioritizes incoming requests, schedules appointments, drafts patient communications, and escalates potentially urgent cases for clinical review. It cannot prescribe medication or independently discharge patients, but it does produce persistent external-state changes through scheduling systems, communication channels, and workflow-management tools. The deployment therefore falls within the autonomous digital-agent category and carries insurable risks that extend well beyond traditional cybersecurity concerns.

The hospital's deployment is represented by the state vector \(s_i=(\alpha_i,\beta_i,\eta_i,g_i,v_i)\), where \(\alpha_i=\alpha^{(2)}\) denotes an autonomous digital agent, \(\beta_i=0.45\) means that roughly 45 percent of weighted authorized actions may execute without human approval, and \(\eta_i=(1,1,0,0,0)\) indicates that the system can communicate with patients and schedule appointments but cannot prescribe medication, modify billing systems, or control physical devices. Under the illustrative permission weights in Table~\ref{tab:permission_weights}, this profile has weighted permission exposure \(\psi_\eta(\eta_i)=3\), reflecting email and scheduling rights without higher-risk billing, record-modification, or device-control permissions. The hospital maintains extensive governance controls including clinician review, audit logging, prompt-injection filtering, incident monitoring, and rollback capabilities, so the underwriting rule assigns it to governance tier \(g_i=g^{(3)}\). Finally, the deployment relies on a single cloud-hosted foundation-model provider, resulting in a dependency concentration measure of \(R(v_i)=0.40\).

The reduced healthcare event set is \(\mathcal E^{\mathrm{hc}}=\{e_H,e_P,e_F,e_D,e_O,e_C\}\), where \(e_H\) denotes hallucination, \(e_P\) prompt injection, \(e_F\) agentic fraud, \(e_D\) model drift, \(e_O\) dependency outage, and \(e_C\) cyber-physical harm. Table~\ref{tab:healthcare_losses} reports representative annual probabilities \(q_i^e\), gross losses \(x_i^e\), and the active coverage layer used in the simplified numerical calculation.

\begin{table}[ht]
\centering
\caption{Representative Agentic-AI Healthcare Loss Scenarios}
\label{tab:healthcare_losses}
\begin{tabular}{llcc}
\hline
Event \(e\) & Active layer \(r\) & \(q_i^e\) & \(x_i^e\) \\
\hline
Hallucination \(e_H\) & TechE\&O & 0.060 & \$250,000 \\
Prompt injection \(e_P\) & Cyber & 0.025 & \$400,000 \\
Agentic fraud \(e_F\) & Mixed & 0.010 & \$300,000 \\
Model drift \(e_D\) & Performance & 0.040 & \$150,000 \\
Dependency outage \(e_O\) & Mixed & 0.080 & \$100,000 \\
Cyber-physical harm \(e_C\) & GL & 0.002 & \$2,000,000 \\
\hline
\end{tabular}
\end{table}

The expected annual loss before insurance is

\begin{equation}
\begin{aligned}
\mathbb E[X_i]
&=
\sum_{e\in\mathcal E^{\mathrm{hc}}}
q_i^e x_i^e
\\
&=
0.060(250,000)
+0.025(400,000)
+0.010(300,000)
\\
&\quad
+0.040(150,000)
+0.080(100,000)
+0.002(2,000,000)
\\
&=
46,000.
\end{aligned}
\label{eq:hospital_expected_loss}
\end{equation}

\noindent The insurer seeks to design \(C_i=(T_i,\{D_i^r,L_i^r\}_{r\in\mathcal R},A_i,\Gamma_i,\Lambda_i,\Psi_i)\). The objective is to maximize risk-adjusted underwriting profit while ensuring that the hospital voluntarily participates, the insurer earns at least the required risk loading, and the hospital maintains the required governance controls. The resulting optimization problem is

\begin{subequations}
\begin{align}
\max_{C_i}
\quad
&
\Pi_i(C_i)
=
T_i
-
\sum_{e\in\mathcal E^{\mathrm{hc}}}
q_i^eY_i^e(C_i)
-
\varrho_i(s_i,C_i)
\label{eq:healthcare_opt_obj}
\\
\text{s.t.}\quad
&
T_i
+
K_i(g_i)
+
\sum_{e\in\mathcal E^{\mathrm{hc}}}
q_i^e
(x_i^e-Y_i^e(C_i))
\le
J_i^0
\label{eq:healthcare_opt_part}
\\
&
T_i
-
\sum_{e\in\mathcal E^{\mathrm{hc}}}
q_i^eY_i^e(C_i)
\ge
\varrho_i(s_i,C_i)
\label{eq:healthcare_opt_profit}
\\
&
g_i
\succeq
g_{\min}(\Psi_i)
\label{eq:healthcare_opt_gov}
\\
&
0\le \lambda_i^{e,r}\le \gamma_i^{e,r},
\qquad
\sum_{r\in\mathcal R}\lambda_i^{e,r}\le 1,
\qquad
\forall e\in\mathcal E^{\mathrm{hc}},\;r\in\mathcal R
\label{eq:healthcare_opt_alloc}
\\
&
\gamma_i^{e,r}\in\{0,1\},
\qquad
\forall e\in\mathcal E^{\mathrm{hc}},\;r\in\mathcal R
\label{eq:healthcare_opt_gamma}
\\
&
0
\le
D_i^r
\le
L_i^r
\le
A_i,
\qquad
\forall r\in\mathcal R.
\label{eq:healthcare_opt_bounds}
\end{align}
\end{subequations}

\noindent Constraint \eqref{eq:healthcare_opt_part} ensures that the hospital prefers purchasing insurance to remaining uninsured. Constraint \eqref{eq:healthcare_opt_profit} imposes the insurer's risk-loaded profitability requirement, while \eqref{eq:healthcare_opt_gov} enforces the governance obligations embedded in the policy. Constraints \eqref{eq:healthcare_opt_alloc}--\eqref{eq:healthcare_opt_gamma} require \(\Gamma_i\) to be a binary coverage-incidence matrix and \(\Lambda_i\) to allocate payment only to active coverage layers.

To illustrate how the optimization problem is solved, suppose that the governance-obligation package \(\Psi_i\) is fixed by regulatory and organizational requirements. In this case study, \(\Psi_i\) is a policy-covenant package requiring \(g_{\min}(\Psi_i)=g^{(3)}\), clinician approval for high-acuity patient communications, least-privilege access to scheduling and messaging tools, retention of complete and tamper-evident action logs for twenty-four months, monthly reporting of autonomous-action rates, quarterly prompt-injection testing, rollback capability, dependency-continuity procedures for the foundation-model provider, insurer notice within seventy-two hours after unauthorized tool execution or suspected patient-data disclosure, an incident report within fourteen days, and a remediation plan within thirty days. The computational implementation in \texttt{case\_study\_optimization.py} solves a finite-menu version of \eqref{eq:healthcare_opt_obj}--\eqref{eq:healthcare_opt_bounds}. It searches two governance choices, \(g^{(3)}\) and \(g^{(4)}\); five common deductible choices from \$0 to \$100,000 in \$25,000 increments; four common layer limits from \$250,000 to \$1,000,000; and three aggregate limits from \$1,000,000 to \$1,500,000. To avoid nonmarketable thin coverage, the menu imposes an expected indemnity ratio of at least 75 percent, a cyber-physical event payment of at least \$400,000, and an insured surplus of at least \$5,000 relative to the uninsured baseline. The risk-loading rule used in the computation is \(\varrho_i=9000+0.05\,\mathbb E[Y_i(C_i)]+0.002(A_i-1,000,000)^++2500-c(g_i^\star)\), where \(c(g^{(4)})=1500\) and \(c(g^{(3)})=0\). This loading captures fixed underwriting expense, claims volatility, dependency concentration, aggregate-limit capital cost, and a governance credit for the stronger tier. For the simplified numerical calculation, the active incidence entries of \(\Gamma_i\) are exactly the event-layer pairs listed in Table~\ref{tab:healthcare_losses}. For each listed pair \((e,r)\), set \(\gamma_i^{e,r}=1\) and \(\lambda_i^{e,r}=1\); for all other layers \(r'\ne r\), set \(\gamma_i^{e,r'}=\lambda_i^{e,r'}=0\). Thus \(\Gamma_i\) is a binary matrix with one active entry in each event row, while \(\Lambda_i\) assigns the full payable loss to that active layer. Mixed-cause claims can be represented by activating multiple entries in the same row and choosing corresponding shares in \(\Lambda_i\).

For any candidate contract, first compute the event payment before the annual aggregate:

\begin{equation}
\widehat Y_i^{e,r}(C_i)
=
\lambda_i^{e,r}
\min\{(x_i^e-D_i^r)^+,L_i^r\},
\qquad
\widehat Y_i^e(C_i)=\sum_{r\in\mathcal R}\widehat Y_i^{e,r}(C_i).
\label{eq:case_indemnity}
\end{equation}

\noindent where \((x)^+=\max\{x,0\}\). Let \(\widehat Y_i(C_i)=\sum_{e\in\mathcal E^{\mathrm{hc}}}\widehat Y_i^e(C_i)\). The annual aggregate is then applied by \(Y_i^e(C_i)=\widehat Y_i^e(C_i)\min\{1,A_i/\widehat Y_i(C_i)\}\) when \(\widehat Y_i(C_i)>0\), and \(Y_i^e(C_i)=0\) otherwise. Because the illustrative calculation has one active layer for each event and common active-layer terms \(D_i^r=\$25,000\) and \(L_i^r=\$500,000\), the six pre-aggregate event payments are \$225,000, \$375,000, \$275,000, \$125,000, \$75,000, and \$500,000. Their sum is \$1,575,000. For the selected annual aggregate \(A_i=\$1,500,000\), all event payments are therefore multiplied by \(1,500,000/1,575,000\), yielding the indemnity values reported in Table~\ref{tab:healthcare_indemnities}.

\begin{table}[ht]
\centering
\caption{Indemnity Payments Under the Candidate Contract}
\label{tab:healthcare_indemnities}
\begin{tabular}{lccc}
\hline
Event & Active layer & Gross loss & \(Y_i^e(C_i)\) \\
\hline
Hallucination \(e_H\) & TechE\&O & \$250,000 & \$214,286 \\
Prompt injection \(e_P\) & Cyber & \$400,000 & \$357,143 \\
Agentic fraud \(e_F\) & Mixed & \$300,000 & \$261,905 \\
Model drift \(e_D\) & Performance & \$150,000 & \$119,048 \\
Dependency outage \(e_O\) & Mixed & \$100,000 & \$71,429 \\
Cyber-physical harm \(e_C\) & GL & \$2,000,000 & \$476,190 \\
\hline
\end{tabular}
\end{table}

For example, the hallucination loss of \$250,000 has active TechE\&O incidence \(\gamma_i^{e_H,\mathrm{TechEO}}=1\), so it produces a pre-aggregate payment of \$225,000 and an aggregate-adjusted payment of \(Y_i^{e_H}(C_i)=\$214,286\). The catastrophic cyber-physical loss of \$2,000,000 has active GL incidence \(\gamma_i^{e_C,\mathrm{GL}}=1\); it is capped by the layer limit at \$500,000 before the annual aggregate and becomes \(Y_i^{e_C}(C_i)=\$476,190\) after aggregate allocation.

Substituting all indemnity values into the insurer's expected payout calculation gives

\begin{align}
\mathbb E[Y_i(C_i)]
&=
\sum_{e\in\mathcal E^{\mathrm{hc}}}
q_i^eY_i^e(C_i)
\nonumber\\
&=
0.060(214,286)
+0.025(357,143)
+0.010(261,905)
\nonumber\\
&\quad
+0.040(119,048)
+0.080(71,429)
+0.002(476,190)
\nonumber\\
&=
\$35,833.
\label{eq:expected_indemnity}
\end{align}

\begin{table}[ht]
\centering
\caption{Computational Solution of the Healthcare Contract-Design Problem}
\label{tab:case_study_computational_solution}
\begin{tabular}{ll}
\hline
Quantity & Computational solution \\
\hline
Governance tier & \(g^{(3)}\) \\
Layer deductible \(D_i^r\) & \$25,000 \\
Layer limit \(L_i^r\) & \$500,000 \\
Aggregate AI limit \(A_i\) & \$1,500,000 \\
Expected gross loss \(\mathbb E[X_i]\) & \$46,000 \\
Expected indemnity \(\mathbb E[Y_i(C_i)]\) & \$35,833 \\
Expected coverage ratio & 77.9\% \\
Expected residual loss & \$10,167 \\
Risk loading \(\varrho_i(s_i,C_i)\) & \$14,292 \\
Feasible premium interval & \$50,125--\$54,833 \\
Selected premium \(T_i^\star\) & \$54,833 \\
Risk-adjusted underwriting profit & \$4,708 \\
Hospital expected annual cost & \$95,000 \\
Hospital savings versus uninsured baseline & \$5,000 \\
\hline
\end{tabular}
\end{table}
 
\begin{figure}[H]
\centering
\includegraphics[width=\textwidth]{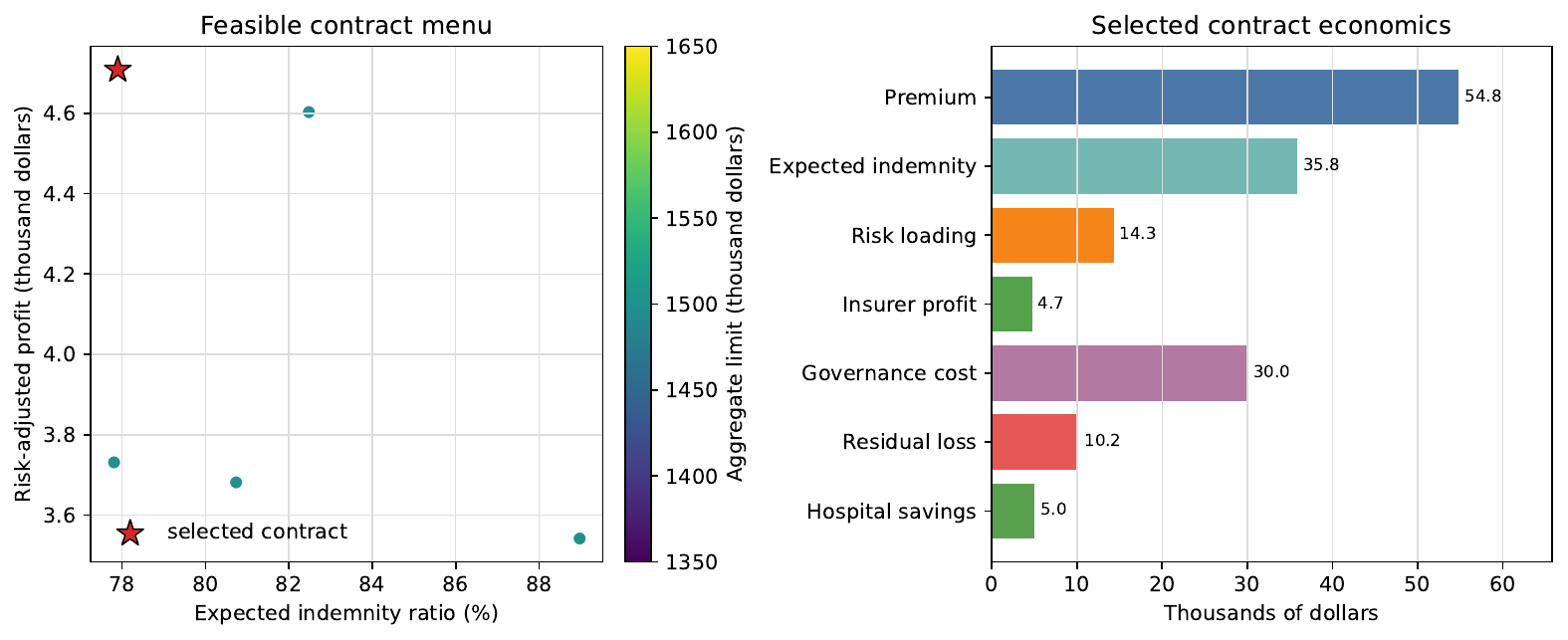}
\caption{Computational solution of the healthcare case study. The left panel plots all feasible contracts in the finite menu by expected indemnity ratio and risk-adjusted underwriting profit, with color indicating the aggregate limit. The right panel decomposes the selected contract economics into premium, expected indemnity, risk loading, insurer profit, governance cost, residual loss, and hospital savings.}
\label{fig:case_study_optimization_results}
\end{figure}

\noindent Equation \eqref{eq:expected_indemnity} determines the actuarially fair component of the premium for the selected contract. The computational solution selects \(g_i^\star=g^{(3)}\), \(D_i^r=\$25,000\), \(L_i^r=\$500,000\), and \(A_i=\$1,500,000\). Under the risk-loading rule above, \(\varrho_i(s_i,C_i)=\$14,292\), so the profitability constraint requires \(T_i\ge35,833+14,292=\$50,125\). This establishes the minimum premium required to satisfy the insurer's risk-adjusted profitability requirement for the selected coverage terms.

Next, the insurer evaluates the hospital's participation constraint. Assume that maintaining the governance controls associated with tier \(g_i=g^{(3)}\) requires annual expenditures of \(K_i(g_i)=\$30,000\).

The expected residual loss retained by the hospital after insurance is \(\mathbb E[X_i-Y_i(C_i)]=46,000-35,833=\$10,167\). Substituting these quantities into the hospital's participation-cost function yields \(J_i^{\mathrm{part}}(C_i)=T_i+30,000+10,167\). Suppose that operating without insurance would result in weaker governance and an expected annual cost of \(J_i^0=\$100,000\). The computational case study imposes a minimum insured surplus of \$5,000, so the participation condition is \(T_i+40,167\le95,000\), which implies \(T_i\le\$54,833\). Combining the minimum and maximum premium conditions yields the feasible premium interval \(50,125\le T_i\le54,833\). The insurer-profit-maximizing premium within this marketable interval is therefore \(T_i^\star=\$54,833\). Substituting this premium into the objective function gives \(\Pi_i(C_i)=54,833-35,833-14,292=\$4,708\). The hospital's expected annual cost becomes \(J_i^{\mathrm{part}}(C_i)=54,833+30,000+10,167=\$95,000\), which is \$5,000 below the uninsured baseline. The optimized contract therefore benefits both parties: the hospital reduces its expected total cost while maintaining strong governance controls, and the insurer receives compensation for assuming agentic-AI risk.

The same code can be used to compare the impact of individual underwriting variables. Table~\ref{tab:case_study_sensitivity} and Figure~\ref{fig:case_study_sensitivity_results} recompute the finite-menu problem under one-factor changes in operational authority, weighted permission exposure, dependency concentration, and governance tier. Except for the final governance-upgrade row, the comparisons hold the governance requirement at \(g^{(3)}\) so that the effect of the changed variable is isolated. The values should be read as scenario-based underwriting calculations rather than empirical estimates.

\begin{table}[ht]
\centering
\small
\caption{Sensitivity of Optimized Contract Economics to Underwriting Variables}
\label{tab:case_study_sensitivity}
\begin{tabular}{llrrrr}
\hline
Scenario & Change & \(\mathbb E[X_i]\) & \(T_i^{\min}\) & \(T_i^{\max}\) & Result \\
\hline
Lower authority & \(\beta_i=0.20\) & \$41,690 & \$45,562 & \$54,798 & profit \$9,235 \\
Baseline & \(\beta_i=0.45,\ \psi_\eta=3,\ R=0.40\) & \$46,000 & \$50,125 & \$54,833 & profit \$4,708 \\
Higher authority & \(\beta_i=0.75\) & \$52,029 & \$55,233 & \$53,669 & gap \$1,564 \\
Broader permissions & \(\psi_\eta(\eta_i)=7\) & \$49,331 & \$51,437 & \$52,752 & profit \$1,315 \\
High dependency & \(R(v_i)=0.80\) & \$51,972 & \$56,778 & \$52,817 & gap \$3,961 \\
Stronger governance & \(g_i=g^{(4)}\) & \$33,948 & \$38,735 & \$42,467 & profit \$3,731 \\
\hline
\end{tabular}
\end{table}
 
\begin{figure}[H]
\centering
\includegraphics[width=\textwidth]{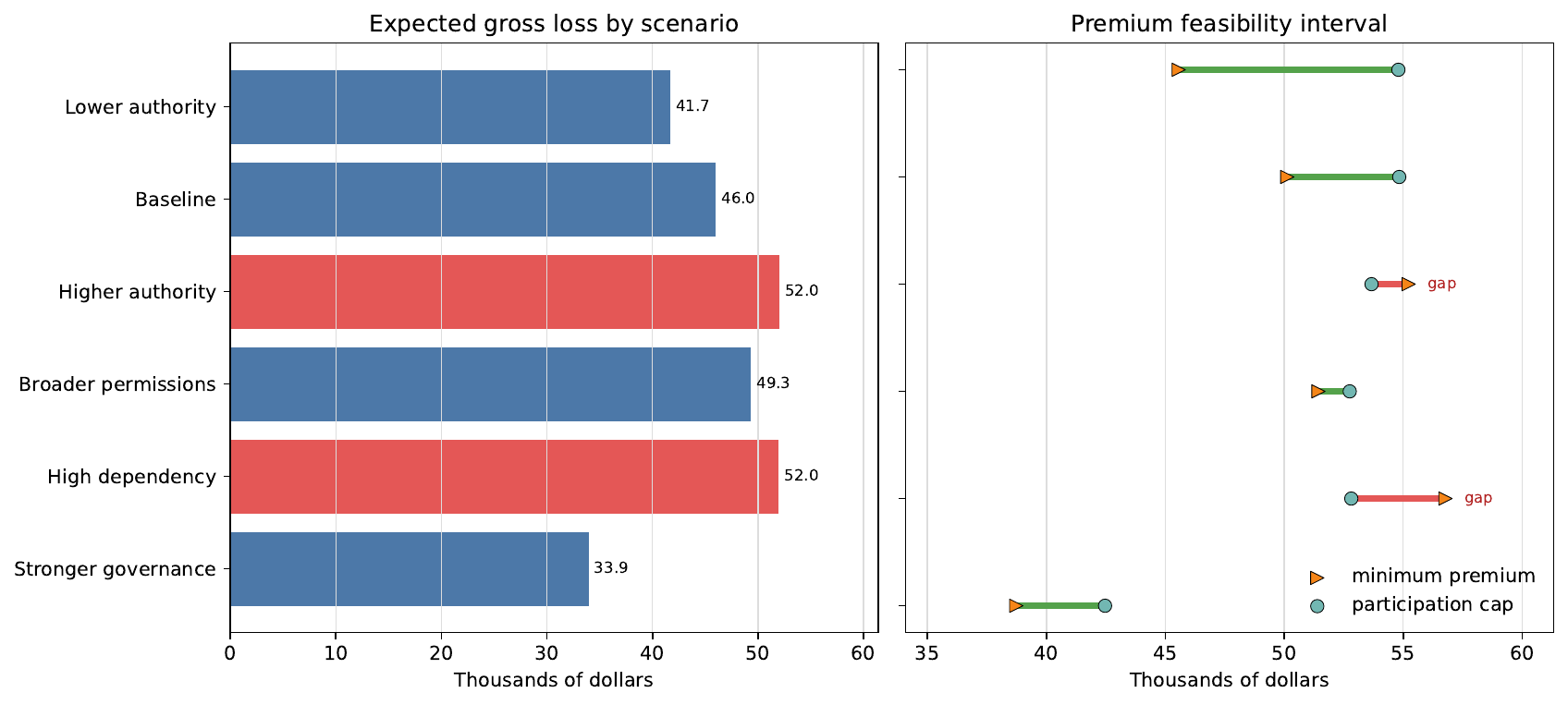}
\caption{Sensitivity of the optimized healthcare contract to selected underwriting variables. The left panel shows expected gross loss under each scenario. The right panel compares the minimum risk-loaded premium \(T_i^{\min}\) with the participation cap \(T_i^{\max}\). A scenario becomes infeasible when the minimum premium exceeds the participation cap.}
\label{fig:case_study_sensitivity_results}
\end{figure}

Reducing delegated authority from \(\beta_i=0.45\) to \(\beta_i=0.20\) lowers expected gross loss from \$46,000 to \$41,690 and raises feasible underwriting profit to \$9,235. Increasing authority to \(\beta_i=0.75\) instead raises expected gross loss to \$52,029 and opens a \$1,564 insurability gap under the fixed participation baseline. Expanding weighted permission exposure from \(\psi_\eta(\eta_i)=3\) to \(7\) remains feasible but cuts risk-adjusted profit from \$4,708 to \$1,315, and raising dependency concentration to \(R(v_i)=0.80\) opens a larger \$3,961 gap by increasing both expected loss and the dependency-related loading. Upgrading the governance requirement to \(g^{(4)}\), by contrast, reduces expected gross loss to \$33,948 and keeps the contract feasible with \$3,731 of risk-adjusted profit after the higher governance cost. These one-factor movements instantiate the structural results of Section~\ref{subsec:insurability_region}: every exposure increase shrinks the marketable surplus toward infeasibility, as in Proposition~\ref{prop:monotone_feasibility}, while the governance upgrade expands it, as in Proposition~\ref{prop:governance_certification}.

This example shows how the framework turns qualitative notions---autonomy category, operational authority, governance maturity tier, external-state permissions, and dependency concentration---into a quantitative contract-design problem. Rather than picking premiums and limits by hand, the insurer derives every contract parameter from a constrained optimization that jointly weighs expected losses, governance incentives, participation constraints, and risk-adjusted profitability. The result is a rigorous foundation for underwriting and pricing agentic-AI systems in healthcare settings.

\section{Designed Contract for the Healthcare Case Study}
\label{sec:designed_healthcare_contract}

The numerical case study in Section~\ref{sec:case_study} solves the contract-design problem; an insurance paper should also show what the resulting policy looks like in practice. This section translates the optimized healthcare example into a realistic contract schedule for the clinical care-coordination agent. The aim is not a complete legal policy form, but a demonstration of how the mathematical objects \(T_i\), \(D_i^r\), \(L_i^r\), \(A_i\), \(\Gamma_i\), \(\Lambda_i\), and \(\Psi_i\) become concrete policy terms.

The designed contract is a twelve-month agentic-AI endorsement attached to a healthcare cyber and technology E\&O insurance package. It covers the specific deployed care-coordination agent described in the case study, with risk state \(\alpha_i=\alpha^{(2)}\), \(\beta_i=0.45\), \(\psi_\eta(\eta_i)=3\), \(g_i=g^{(3)}\), and \(R(v_i)=0.40\). The policy is priced at the optimized premium \(T_i^\star=\$54{,}833\). It uses a \$25,000 per-event deductible, a \$500,000 per-event active-layer limit, and a \$1,500,000 annual AI aggregate limit. These values are the selected solution reported in Table~\ref{tab:case_study_computational_solution}.

\begin{table}[ht]
\centering
\caption{Issued Contract Schedule for the Clinical Care-Coordination Agent}
\label{tab:issued_healthcare_contract}
\small
\begin{tabularx}{\textwidth}{>{\raggedright\arraybackslash}p{0.28\textwidth}>{\raggedright\arraybackslash}X}
\hline
Contract item & Designed policy term \\
\hline
Insured AI system & Clinical care-coordination agent used for patient-message triage, appointment scheduling, patient communications, workflow routing, and escalation support. \\
Policy period & Twelve months, with renewal subject to updated risk-state review and control evidence. \\
Premium & \(T_i^\star=\$54{,}833\). This equals the participation cap in the optimized marketable interval and yields risk-adjusted underwriting profit of \$4,708. \\
Required governance tier & \(g_i^\star=g^{(3)}\), with continued eligibility conditioned on maintaining the covenant package \(\Psi_i\). \\
Deductible & \(D_i^r=\$25{,}000\) for each covered event and active coverage layer. \\
Per-event layer limit & \(L_i^r=\$500{,}000\) for each active layer shown in Table~\ref{tab:issued_healthcare_layer_schedule}. \\
Annual AI aggregate limit & \(A_i=\$1{,}500{,}000\) across all covered agentic-AI events during the policy period. \\
Expected economics & Expected gross loss is \$46,000, expected indemnity is \$35,833, expected residual loss is \$10,167, and the risk loading is \$14,292. \\
Covered deployment boundary & The agent may use patient messaging and scheduling tools. Prescribing, billing changes, EHR modification beyond approved workflow notes, procurement, and physical-device control are not covered unless endorsed after underwriting review. \\
\hline
\end{tabularx}
\end{table}

The coverage grant is event-specific. Table~\ref{tab:issued_healthcare_layer_schedule} gives the corresponding binary incidence entries \(\gamma_i^{e,r}\) and payment shares \(\lambda_i^{e,r}\). For each listed event, the active layer has \(\gamma_i^{e,r}=1\) and \(\lambda_i^{e,r}=1\); nonlisted layer-event pairs have \(\gamma_i^{e,r}=0\) and \(\lambda_i^{e,r}=0\). This is a simple one-layer schedule chosen for the case study. In a real mixed-cause claim, the same structure could allocate one event across several active layers by using several positive \(\lambda_i^{e,r}\) values whose sum is at most one.

\begin{table}[ht]
\centering
\caption{Coverage-Layer Schedule for the Issued Healthcare Contract}
\label{tab:issued_healthcare_layer_schedule}
\small
\begin{tabularx}{\textwidth}{>{\raggedright\arraybackslash}p{0.20\textwidth}>{\raggedright\arraybackslash}p{0.15\textwidth}>{\raggedright\arraybackslash}p{0.17\textwidth}>{\raggedright\arraybackslash}X}
\hline
Covered event & Active layer & Modeled payment & Practical coverage interpretation \\
\hline
Hallucination \(e_H\) & TechE\&O & \$214,286 & Erroneous AI-generated care-coordination recommendation, message, routing decision, or escalation failure relied upon in clinical operations. \\
Prompt injection \(e_P\) & Cyber & \$357,143 & Adversarial prompt or malicious input causes unauthorized disclosure, tool invocation, or security-policy bypass. \\
Agentic fraud \(e_F\) & Mixed & \$261,905 & Autonomous communication or workflow action contributes to fraudulent transaction, deceptive communication, or improper benefit routing. \\
Model drift \(e_D\) & Performance & \$119,048 & Degradation in triage, prioritization, or routing performance causes measurable operational loss or required remediation. \\
Dependency outage \(e_O\) & Mixed & \$71,429 & Foundation-model, cloud, connector, or external API outage disrupts covered care-coordination operations. \\
Cyber-physical harm \(e_C\) & GL & \$476,190 & Covered AI workflow contributes to bodily injury, delayed escalation of urgent care, or related liability; payment is capped by the layer limit and then adjusted by the annual aggregate. \\
\hline
\end{tabularx}
\end{table}

The governance covenant \(\Psi_i\) is central to the policy because the premium and coverage terms are justified only for the underwritten risk state. The issued contract requires \(g_i^\star=g^{(3)}\), clinician approval for high-acuity messages and safety-critical workflow changes, least-privilege access to messaging and scheduling systems, tamper-evident logs of prompts, outputs, tool calls, approvals, and actions for twenty-four months, monthly reporting of the autonomous-action rate used to estimate \(\beta_i\), quarterly prompt-injection and adversarial testing, and documented rollback procedures. The policy also requires dependency-continuity procedures for the foundation-model provider and any critical external connector.

The permission covenant fixes the underwritten permission boundary \(\eta_i=(1,1,0,0,0)\). The insured must notify the insurer before enabling new external-state permissions such as billing, record modification, procurement, prescribing support, or device control. A material expansion can be defined by a positive increase in \(\psi_\eta(\eta_i)\) beyond an insurer-specified threshold, or by activation of any high-risk permission class. Until the expansion is underwritten and endorsed, losses arising from the newly enabled permission are outside the designed contract boundary.

The authority covenant fixes the underwritten operating authority at \(\beta_i\le0.45\). The hospital must report the monthly fraction of weighted actions that execute without human approval. If telemetry shows a sustained increase above the underwritten threshold, the insurer may require remediation, re-underwrite the policy, remove governance credits, or increase deductibles for future events. This term is important because two agents with the same autonomy category \(\alpha^{(2)}\) can have very different risk if one only drafts recommended actions and the other executes most actions automatically.

Claims duties are also stated in operational terms. The insured must provide notice within seventy-two hours after unauthorized tool execution, suspected patient-data disclosure, dependency failure causing material workflow disruption, or any agent-related event reasonably expected to give rise to a claim. It must preserve logs and approval records, submit an incident report within fourteen days, and submit a remediation plan within thirty days. These duties are not merely administrative. They preserve the evidence needed to identify the event \(e\), confirm the active coverage layer \(r\), evaluate the incidence matrix \(\Gamma_i\), determine any payment share in \(\Lambda_i\), and compute the indemnity \(Y_i^e(C_i)\).

This contract shows how the optimization output becomes an insurable product. The insurer is compensated for expected indemnity and risk loading; the hospital obtains an expected coverage ratio of 77.9 percent and cuts its expected annual cost by \$5,000 against the uninsured baseline; and the governance covenants hold the deployed agent within the priced risk state. The policy is thus far more than a premium and a limit---it is a coupled pricing, coverage, monitoring, and governance mechanism for one specific agentic-AI deployment.

\section{Automated Agentic-AI Insurance Workflow}
\label{sec:automated_workflow}

\begin{figure*}[tbp]
\centering
\resizebox{\textwidth}{!}{%
\begin{tikzpicture}[
  >=stealth, line width=0.8pt, font=\sffamily\footnotesize,
  hdr/.style={rounded corners=3pt, text=white, font=\sffamily\footnotesize\bfseries,
              minimum height=0.62cm, align=center, inner sep=3pt},
  st/.style={rounded corners=3pt, draw=violet!65, fill=violet!6, text width=6.6cm,
             align=left, font=\sffamily\small, minimum height=0.6cm, inner sep=4pt},
  num/.style={circle, fill=violet!75, text=white, font=\sffamily\scriptsize\bfseries,
              inner sep=1pt, minimum size=0.5cm, anchor=east},
  mo/.style={rounded corners=2pt, draw=violet!50, fill=violet!12, text width=1.75cm,
             align=center, font=\sffamily\scriptsize, minimum height=0.95cm, inner sep=2pt},
  ag/.style={circle, draw=blue!65, fill=blue!12, minimum size=0.95cm, inner sep=0pt},
  ex/.style={rounded corners=2pt, draw=green!45!black, fill=green!7, text width=4.15cm,
             align=left, font=\sffamily\scriptsize, inner sep=4pt, minimum height=0.98cm},
  flow/.style={->, line width=1pt, blue!70},
  pol/.style={->, line width=1pt, violet!75},
  esc/.style={->, dashed, line width=1pt, violet!70},
  oflow/.style={->, line width=1pt, green!55!black},
  vf/.style={->, line width=0.8pt, violet!65},
]

\begin{scope}[on background layer]
  \draw[rounded corners=6pt, draw=blue!60, fill=blue!3, line width=1pt]
        (-13.5,3.35) rectangle (-8.85,11.7);
  \draw[rounded corners=6pt, draw=violet!65, fill=violet!2, line width=1pt]
        (-6.15,2.3) rectangle (6.15,11.7);
  \draw[rounded corners=6pt, draw=green!50!black, fill=green!3, line width=1pt]
        (7.85,2.61) rectangle (12.55,11.7);
\end{scope}

\node[hdr, fill=blue!70, minimum width=4.4cm] at (-11.17,11.35)
     {Insured Agentic-AI Deployment};
\node[ag, minimum size=1.3cm, fill=blue!18, font=\sffamily\small] (orc) at (-11.17,8.4) {Orch.};
\node[ag] (a1) at (-11.17,10.20) {};
\node[ag] (a2) at (-12.88,8.96) {};
\node[ag] (a3) at (-12.23,6.94) {};
\node[ag] (a4) at (-10.11,6.94) {};
\node[ag] (a5) at (-9.46,8.96) {};
\foreach \a in {a1,a2,a3,a4,a5} {\draw[blue!65, line width=0.8pt] (orc) -- (\a);}
\draw[blue!40, dashed, line width=0.7pt] (a1)--(a2)--(a3)--(a4)--(a5)--(a1);
\node[font=\sffamily\small\itshape, text=blue!70] at (-11.17,5.9)
     {Interconnected AI agents};
\node[rounded corners=3pt, draw=blue!45, fill=blue!6, text width=4.15cm, align=center,
      font=\sffamily\footnotesize, minimum height=1.5cm] at (-11.17,4.35)
     {\textbf{External environment:}\\[2pt] APIs, databases, users, devices, third-party platforms};

\node[hdr, fill=violet!75, minimum width=12.0cm] at (0,11.35)
     {Automated Insurance Workflow Engine};

\node[st] (s1) at (0,10.5) {Onboarding \& risk submission};
\node[st] (s2) at (0,9.72) {Risk assessment \& underwriting};
\node[st] (s3) at (0,8.94) {Policy issuance};
\node[num] at ([xshift=-0.06cm]s1.west) {1};
\node[num] at ([xshift=-0.06cm]s2.west) {2};
\node[num] at ([xshift=-0.06cm]s3.west) {3};
\draw[vf] (s1) -- (s2);
\draw[vf] (s2) -- (s3);
\node[font=\sffamily\scriptsize, text=violet!55, anchor=west] at (3.55,9.72)
     {\begin{tabular}{@{}l@{}}Underwriting \&\\ policy creation\end{tabular}};

\draw[rounded corners=3pt, draw=violet!30, fill=violet!4] (-5.65,6.77) rectangle (5.65,8.42);
\node[font=\sffamily\scriptsize\itshape, text=violet!75!black] at (0,8.18)
     {Continuous Monitoring \& Risk Detection};
\node[mo] (m1) at (-4.48,7.5) {Runtime monitoring};
\node[mo] (m2) at (-2.24,7.5) {Anomaly detection};
\node[mo] (m3) at (0,7.5) {Policy trigger engine};
\node[mo] (m4) at (2.24,7.5) {Risk scoring};
\node[mo] (m5) at (4.48,7.5) {Alerting};
\draw[vf] (s3.south) -- (0,8.42);

\node[st] (s4) at (0,6.25) {Automated claim initiation};
\node[st] (s5) at (0,5.47) {Claim validation \& assessment};
\node[st] (s6) at (0,4.69) {Decision \& approval};
\node[st] (s7) at (0,3.91) {Settlement \& payout};
\node[num] at ([xshift=-0.06cm]s4.west) {4};
\node[num] at ([xshift=-0.06cm]s5.west) {5};
\node[num, fill=green!55!black] at ([xshift=-0.06cm]s6.west) {6};
\node[num, fill=green!55!black] at ([xshift=-0.06cm]s7.west) {7};
\draw[vf] (0,6.77) -- (s4.north);
\draw[vf] (s4) -- (s5);
\draw[vf] (s5) -- (s6);
\draw[oflow] (s6) -- (s7);

\node[rounded corners=3pt, draw=violet!45, fill=violet!5, text width=11.0cm, align=center,
      font=\sffamily\scriptsize, text=violet!80!black, minimum height=0.5cm] (ev) at (0,2.95)
     {\textbf{Key data \& evidence:} event logs, telemetry, audit trails, system context, external feeds, control posture};
\draw[<->, dotted, violet!55, line width=0.7pt] (-5.15,3.33) -- (-5.15,6.77)
     node[midway, rotate=90, font=\sffamily\scriptsize, text=violet!55, yshift=6pt] {feeds};

\node[hdr, fill=green!55!black, minimum width=4.5cm] at (10.2,11.35)
     {Human-in-the-Loop (Exceptions)};
\node[ex] at (10.2,10.25) {\textbf{Complex losses:} high-severity or systemic impacts};
\node[ex] at (10.2,8.87) {\textbf{Disputes:} coverage disagreements or conflicting evidence};
\node[ex] at (10.2,7.49) {\textbf{Fraud review:} suspicious behavior, collusion, abuse};
\node[ex] at (10.2,6.11) {\textbf{Coverage ambiguity:} unclear terms or novel scenarios};
\node[ex] at (10.2,4.73) {\textbf{High-severity events:} catastrophic or regulatory};
\node[ex, fill=green!12] at (10.2,3.35) {\textbf{Human decision:} recorded and fed back to the engine};

\draw[flow] (-8.85,9.4) -- (-6.15,9.4)
     node[midway, above, align=center, font=\sffamily\scriptsize, text=blue!70] {Risk state\\ $s_i$};
\draw[pol] (-6.15,8.0) -- (-8.85,8.0)
     node[midway, below, align=center, font=\sffamily\scriptsize, text=violet!75] {Policy \&\\ governance};
\draw[esc] (6.15,9.3) -- (7.85,9.3)
     node[midway, above, font=\sffamily\scriptsize, text=violet!70] {Escalation};
\draw[oflow] (7.85,7.9) -- (6.15,7.9)
     node[midway, below, font=\sffamily\scriptsize, text=green!55!black] {Feedback};

\end{tikzpicture}%
}
\caption{
Automated insurance workflow for insured agentic-AI systems. The workflow begins with onboarding and underwriting of insured AI agents characterized by the risk state \(s_i=(\alpha_i,\beta_i,\eta_i,g_i,v_i)\). Automated underwriting uses the risk-state, coverage, pricing, and optimization frameworks developed in Sections~\ref{sec:risk_coverage_framework} and \ref{sec:contract_design} to determine premiums, limits, deductibles, coverage allocations, and governance requirements. Once coverage is active, continuous monitoring, anomaly detection, trigger evaluation, and risk scoring observe the insured agent in real time. Covered events initiate automated claims processing, validation, indemnity computation, and settlement. Human experts intervene only for exceptional situations such as catastrophic losses, disputes, fraud investigations, coverage ambiguity, or regulatory concerns. The workflow operationalizes the mathematical programming framework by transforming insurance contracts into continuously monitored and computationally executable risk-transfer mechanisms.
}
\label{fig:automated_insurance_workflow}
\end{figure*}

The risk-state, coverage, pricing, and optimization frameworks developed above provide the economic and contractual foundation for agentic-AI insurance. Equations \eqref{eq:insured_cost}--\eqref{eq:coverage_bounds} define the underwriting variables, coverage structure, indemnity calculations, governance incentives, and optimization constraints that determine an economically feasible insurance contract. Figure~\ref{fig:automated_insurance_workflow} illustrates how these mathematical constructs can be operationalized through an automated agentic-AI insurance workflow.

Figure~\ref{fig:automated_insurance_workflow} should be read from left to right. The left panel represents the insured AI-agent network and the external environment with which it interacts; these interactions generate the underwriting inputs \((\alpha_i,\beta_i,\eta_i,g_i,v_i)\). The middle panel is the insurer-side workflow engine. Its upper stages transform the submitted risk state into underwriting, pricing, and policy-creation decisions, while its monitoring layer converts event logs, telemetry, audit trails, system context, external data feeds, and control-posture evidence into trigger evaluations and dynamic risk scores. The lower stages then move from trigger detection to claim initiation, validation, decision, approval, settlement, and payout. The right panel indicates that human involvement is not eliminated but reserved for exception classes, including complex losses, disputes, fraud review, coverage ambiguity, high-severity events, and final human decisions that feed back into the automated workflow. Throughout, the risk-state variables map to operational insurance outputs: autonomy category and operational authority primarily affect severity and liability, the permission profile defines feasible loss pathways and weighted exposure, governance maturity affects expected loss and incentive terms, and dependency concentration captures systemic accumulation risk.

A distinguishing feature of agentic-AI systems is that they emit continuous streams of operational evidence---telemetry, audit logs, execution traces, dependency-status information, permission changes, model-performance indicators, and governance-control metrics. Where traditional insurance leans on periodic audits and retrospective claims investigations, these signals allow continuous observation of the insured system throughout the policy period. Many underwriting, monitoring, and claims-management functions can therefore be delegated to specialized insurance agents acting on the insurer's behalf.

The workflow begins with agent onboarding and risk submission. Information describing the insured deployment is collected and mapped into the agentic-AI risk state \(s_i=(\alpha_i,\beta_i,\eta_i,g_i,v_i)\). This state representation serves as the primary underwriting input and determines the event probabilities, expected losses, governance requirements, and risk-loading components appearing in the pricing and optimization framework.

Automated underwriting then evaluates the submitted risk state and solves the contract-design problem defined by \eqref{eq:insurer_problem}--\eqref{eq:coverage_bounds}. The resulting contract specifies the premium \(T_i\), layer deductibles \(D_i^r\), layer limits \(L_i^r\), aggregate AI limit \(A_i\), binary coverage-incidence matrix \(\Gamma_i\), payment-share matrix \(\Lambda_i\), and governance obligations \(\Psi_i\). Policy issuance therefore becomes the execution of a computational underwriting process rather than a purely manual assessment activity.

Once coverage is active, continuous monitoring modules observe the insured agent and its operating environment. Runtime monitoring, anomaly detection, policy-trigger evaluation, and dynamic risk scoring continuously update the insurer's estimate of exposure. Observed events are mapped into the event space \(\mathcal E\) introduced earlier, allowing the system to determine whether a covered loss event has occurred. Because the event taxonomy and coverage allocations are already embedded within the contract, trigger evaluation can be performed automatically.

When a trigger condition is satisfied, an automated claims agent initiates the claims process. Evidence is collected from telemetry records, audit logs, dependency-status information, system context, and external data sources. Claim validation determines the applicable event category, estimates the realized loss \(x_i^e\), evaluates exclusions and coverage conditions, and computes the corresponding indemnity payment using \eqref{eq:indemnity}. For routine claims, settlement can be executed automatically once contractual requirements have been verified.

\subsection{Discrete-Event Simulation of Online Claims Execution}

To make the workflow operational, we implement a discrete-event simulator for the designed healthcare contract in Section~\ref{sec:designed_healthcare_contract}. The issued contract is fixed throughout the simulation: \(T_i^\star=\$54,833\), \(D_i^r=\$25,000\), \(L_i^r=\$500,000\), \(A_i=\$1,500,000\), the coverage schedule is the event-layer assignment in Table~\ref{tab:issued_healthcare_layer_schedule}, and the policy covenants fix the endorsed permission boundary, authority threshold, logging requirements, incident notice requirement, and governance tier \(g_i^\star=g^{(3)}\). The simulation therefore does not re-optimize the contract. To make the online process visible without making a single insured look unrealistically loss-prone, we simulate a small portfolio of 25 comparable healthcare deployments, each issued on the same Section~\ref{sec:designed_healthcare_contract} terms. The per-contract claim-arrival process remains calibrated to the status-quo frequencies used in the healthcare case study: the six annual event probabilities sum to 0.217 material events per contract-year, and the contract-design calculation gives expected indemnity of \$35,833 per contract-year. Across the portfolio, this corresponds to 5.4 expected material claims and \$895,825 of expected indemnity per year.

The simulator maintains an event queue whose elements are time-stamped workflow states: telemetry alert, automated detection, claim validation, human review, and claim closure. Each alert carries an event class \(e\), a realized loss estimate \(x_i^e\), a detection score, an evidence score, an ambiguity score, and indicators for permission-boundary or covenant violations. An alert becomes an automated detection when its detection score exceeds the trigger threshold. A detected material event becomes a claim notice. The validation agent denies claims that fall outside the endorsed permission boundary, violate a governance covenant, or lack sufficient evidence. It automatically pays claims with strong evidence and low ambiguity. It escalates claims with high ambiguity, cyber-physical consequences, or large projected payments. For an approved claim under contract \(j\) arriving at time \(t\), the operational payment is charged to that contract's remaining annual aggregate as \(Y_{j,t}=\min\{(x_{j,t}-D)^+,L,A-\sum_{\tau<t}Y_{j,\tau}\}\). Thus each contract has its own aggregate, while the figure reports the portfolio-level sum of validated payments.

\begin{table}[H]
\centering
\caption{Discrete-Event Simulation of Automated Claims Execution}
\label{tab:automated_claim_simulation}
\begin{tabular}{ll}
\hline
Quantity & Simulated value \\
\hline
Portfolio size & 25 contracts \\
Expected material event frequency & 0.217 per contract-year \\
Expected portfolio claim count & 5.4 per year \\
Expected portfolio indemnity & \$895,825 \\
Telemetry alerts processed & 102 \\
Automated detections & 25 \\
Claim notices opened & 16 \\
Automatically paid claims & 9 \\
Human-reviewed claims & 0 \\
Denied claims & 7 \\
Total indemnity paid & \$1,303,776 \\
Remaining portfolio aggregate capacity & \$36,196,224 \\
Portfolio aggregate capacity used & 3.48\% \\
Median claim closure time & 1.3 days \\
\hline
\end{tabular}
\end{table}
 
\begin{figure}[p]
\centering
\includegraphics[width=\textwidth]{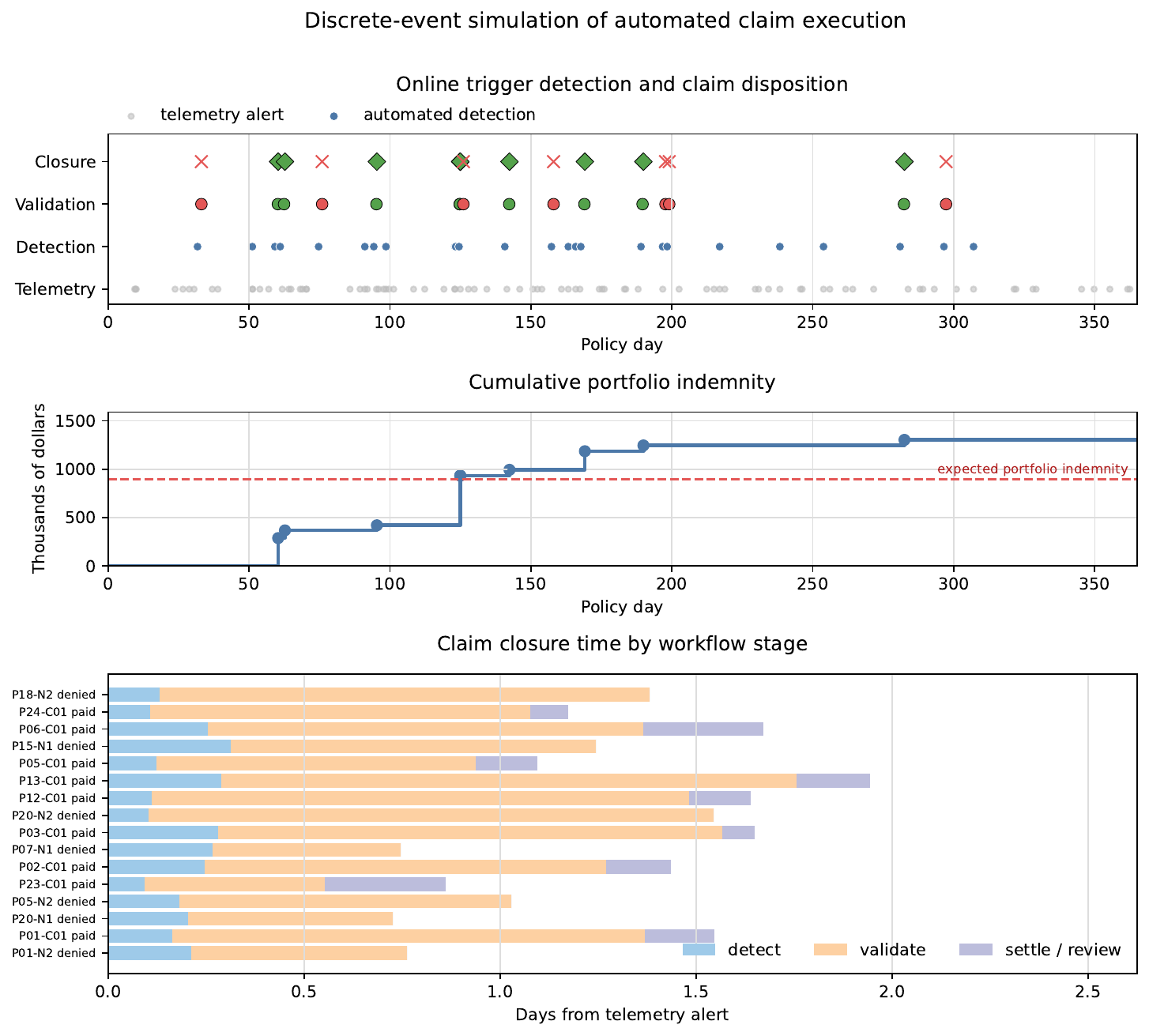}
\caption{Discrete-event simulation of online automated claim execution for a small portfolio of designed healthcare contracts. The panels show trigger disposition, cumulative portfolio indemnity relative to expected indemnity, and claim closure time by workflow stage.}
\label{fig:automated_claim_simulation}
\end{figure}

Figure~\ref{fig:automated_claim_simulation} illustrates one portfolio-year realization. The monitoring layer processes 102 telemetry alerts across 25 contracts. Twenty-five alerts exceed the automated-detection threshold and 16 become claim notices. Nine covered claims are validated and paid automatically, while seven notices are denied because they lack sufficient evidence or fall outside the endorsed permission boundary. Total indemnity paid is \$1,303,776, compared with \$895,825 of expected portfolio indemnity, leaving \$36,196,224 of remaining portfolio aggregate capacity and using 3.48\% of available aggregate capacity. The realized payment is above the actuarial expectation but remains credible for an active portfolio year because the per-contract frequency is unchanged and the additional events arise from portfolio breadth rather than inflated single-policy frequency. The example shows how the same mathematical contract can be executed as a monitored online process: telemetry produces triggers, triggers produce validation tasks, validation produces automatic payment, denial, or escalation, and every approved payment updates the corresponding contract aggregate.

Human participation is reserved primarily for exceptional situations. As shown in Figure~\ref{fig:automated_insurance_workflow}, escalation occurs when the automated workflow encounters disputed causation, suspected fraud, ambiguous coverage allocation, catastrophic losses, regulatory concerns, or other circumstances requiring human judgment. Human decisions are subsequently fed back into the workflow, enabling continual refinement of underwriting models, claims procedures, governance standards, and risk-assessment mechanisms.

From this vantage, the mathematical-programming framework and the automated workflow are complementary halves of one system. The optimization model determines \emph{what} contract to offer through the solution of \eqref{eq:insurer_problem}, and the workflow determines \emph{how} that contract is monitored, enforced, and executed across the policy lifecycle. Automated agentic-AI insurance is feasible precisely because these two capabilities meet: the formal contract-design framework of Equations \eqref{eq:insured_cost}--\eqref{eq:coverage_bounds}, and the ability of agentic-AI systems to supply real-time observability, automated reasoning, evidence collection, risk computation, and claims execution.

\section{Conclusion}

Agentic AI pushes insurance beyond asset-centered cyber coverage toward behavior-centered risk transfer. The relevant exposure is not whether an organization uses AI, but what the deployed agent is authorized to do, which external systems it can affect, how often actions execute without human approval, how governance controls are evidenced, and how concentrated the deployment is in shared model, cloud, data, and connector dependencies. This paper formalizes these underwriting questions through the risk state \(s_i=(\alpha_i,\beta_i,\eta_i,g_i,v_i)\) and connects that state to event probabilities, loss severities, coverage incidence, indemnity allocation, governance covenants, risk loadings, and contract-design constraints.

The main contribution is a unified mathematical architecture for AI-native insurance. The framework separates event occurrence from coverage-layer allocation through \(\Gamma_i\) and \(\Lambda_i\), treats governance covenants \(\Psi_i\) as enforceable policy obligations rather than informal underwriting notes, and formulates the insurer's contract problem over premiums, deductibles, limits, aggregate exposure, allocation rules, and governance requirements. On this foundation, three structural results (Definition~\ref{def:insurability_region} and Propositions~\ref{prop:monotone_feasibility}--\ref{prop:governance_certification}) show that insurability is a region of the underwriting state space, that fixed-terms premium feasibility deteriorates monotonically as exposure increases under monotone pricing primitives, and that a governance threshold certifies a deployment as insurable at given coverage terms.

The framework also gives a policy interpretation of insurance as an AI operating cost and regulatory-control mechanism. If insurance is bundled into the cost of AI deployment or mandated for high-risk systems, the premium functions like a risk price: it can discourage deployments whose private benefits do not justify their expected harms, while requiring financial responsibility for systems that can affect patients, customers, infrastructure, or the public. This creates a calibration problem for regulators and insurers. A mandate that is too broad may operate as a general AI tax and deter low-risk innovation, while a mandate that is too narrow may leave high-impact agentic systems without adequate governance or loss-absorption capacity.

The healthcare care-coordination case study demonstrates the framework in practice, translating operational facts about a clinical agent into event probabilities, gross losses, indemnity payments, risk loadings, participation bounds, and feasible premiums. The numerical comparisons confirm the structural results: higher delegated authority, broader permission exposure, and stronger dependency concentration narrow or eliminate the feasible premium interval, while stronger governance lowers expected loss and restores insurability. This is the central economic role of governance-sensitive pricing---monitoring, approval gates, logging, testing, rollback capability, and notice obligations are not compliance details but contractible controls that shape both risk and marketability.

Several extensions remain important. First, empirical calibration will require claims data, incident reports, audit evidence, telemetry, and stress scenarios specific to agentic-AI deployments. Second, portfolio models are needed to capture accumulation risk from shared model providers, cloud platforms, data sources, and agent frameworks. Third, dynamic policy mechanisms should adjust pricing, deductibles, limits, and covenants as agents gain permissions, change autonomy modes, or migrate dependencies. Finally, reinsurance and capital models will be needed for correlated failures that strike many insured agents at once. Together these steps would carry agentic-AI insurance from a contract-design framework toward a full actuarial and market infrastructure for governing autonomous AI risk.
 
\bibliographystyle{abbrv}

\end{document}